%% file: templateArxiv.tex
\documentclass{article}

\usepackage{PRIMEarxiv}
\input{macro.tex}

\usepackage{natbib}
\usepackage{amsmath}
\usepackage{amsthm}
\usepackage{amssymb}
\usepackage{algorithm}

\usepackage{amsmath}
\usepackage{amssymb}
\usepackage{mathtools}
\usepackage{amsthm}
\usepackage{accents}
\usepackage{bbm}
\usepackage{multicol,lipsum,xparse}

\DeclareMathOperator{\err}{err}

\usepackage[utf8]{inputenc} 
\usepackage[T1]{fontenc}    
\usepackage[hidelinks]{hyperref}
\usepackage{url}            
\usepackage{booktabs}       
\usepackage{amsfonts}       
\usepackage{nicefrac}       
\usepackage{microtype}      
\usepackage{lipsum}
\usepackage{fancyhdr}       
\usepackage{graphicx}       
\graphicspath{{media/}}     
\usepackage{graphicx}
\usepackage{subcaption}

\theoremstyle{plain}

\title{Online Conformal Prediction: Enforcing monotonicity via Online Optimization}

\author{
  Eduardo Ochoa Rivera, Ambuj Tewari \\
  University of Michigan, \\
  Ann Arbor, USA \\
  \today
}

\begin{document}
\maketitle

\begin{abstract}


Conformal prediction provides a principled framework for uncertainty quantification with finite-sample coverage guarantees. While recent work has extended conformal prediction to online and sequential settings, existing methods typically focus on a single coverage level and do not ensure consistency across multiple confidence levels. In many real-world applications, such as weather forecasting, macroeconomic prediction, and risk management, different users operate under heterogeneous risk tolerances and require calibrated uncertainty estimates across a range of coverage levels. In such settings, it is desirable to produce prediction sets corresponding to different coverage levels that are nested and valid simultaneously. In this paper, we propose two novel online conformal prediction methods that output \emph{nested prediction sets} across a range of coverage levels, enabling simultaneous uncertainty quantification across the entire risk spectrum. Beyond interpretability, jointly estimating multiple coverage levels is known to improve statistical efficiency in classical quantile regression by enforcing non-crossing constraints and sharing information across quantiles.
Our approaches leverage an online optimization perspective with small regret that translates to quantile estimation error control while enforcing nestedness of prediction sets. Empirical results on synthetic and real-world datasets, including applications in forecasting tasks with heterogeneous risk requirements, demonstrate that our method achieves stable coverage across all levels, strictly nested prediction sets, and improved efficiency compared to existing online conformal baselines.

\end{abstract}


\section{Introduction}\label{section:intro}

Conformal prediction has emerged as a powerful tool for constructing prediction sets with distribution-free coverage guarantees \citep{papadopoulos2002inductive, vovk2005algorithmic, shafer2008tutorial}. Given a desired coverage level $1-\alpha$, conformal methods output a prediction set that contains the true outcome with probability at least $1-\alpha$ under minimal assumptions. These guarantees have made conformal prediction attractive in safety-critical and risk-sensitive applications \citep{lu2022fair, lu2023federated, kuchibhotla2023nested, lindemann2023safe, clough2024enhancing}.

Many modern applications, however, operate in an online or streaming regime, where data arrives sequentially and predictions must be produced in real time \citep{chernozhukov2018exact, gibbs2021adaptive, zaffran2022adaptive}. This has motivated the development of online conformal prediction methods, which aim to maintain validity while update predictions as new data is observed. In contrast to the batch setting, where nestedness of prediction sets naturally arises for standard nonconformity scores, this property is generally lost in the online setting. To the best of our knowledge, existing online conformal prediction methods operate at a single coverage level and must be rerun independently for different values of $\alpha$, resulting in prediction sets that are not necessarily nested. This limitation restricts their applicability in settings where uncertainty must be assessed across multiple risk levels simultaneously.
 
In practice, decision-makers often require prediction sets at multiple confidence levels simultaneously, reflecting heterogeneous risk preferences across users or decision contexts. For example, in hurricane forecasting, emergency planners may require different levels of conservativeness depending on location and evacuation costs \citep{regnier2008public}. Similarly, in macroeconomic forecasting, policymakers rely on predictive distributions that reflect varying degrees of uncertainty, corresponding to different implicit risk levels \citep{dowd2007too}. In such cases, it is natural to require that prediction sets are \emph{nested}, so that higher coverage levels correspond to larger sets. However, direct application of current online conformal prediction to multiple $\alpha$ values do not guarantee this property.

Beyond interpretability, there is also a strong statistical motivation for jointly estimating multiple coverage levels. In the quantile regression literature, estimating quantiles independently can lead to suboptimal performance and violations of monotonicity (quantile crossing). Joint estimation methods that enforce non-crossing constraints produce coherent quantile functions and improve statistical efficiency by sharing information across quantiles \citep{zou2008composite, liu2011simultaneous}. This suggests that, in the online conformal setting, jointly update multiple quantiles may similarly improve stability and efficiency, compared to running independently for each coverage level.

This paper addresses these challenges by proposing two novel online conformal prediction methods that produce nested prediction sets across a range of coverage levels while preserving long-run coverage guarantees. 

\paragraph{Contributions.}
The main contributions of this paper are as follows:
\begin{itemize}
    \item We introduce a novel formulation of online conformal prediction that produces \emph{nested prediction sets} across multiple coverage levels, enabling calibrated uncertainty quantification over the entire risk spectrum.
    
    \item We propose two online algorithms based on convex optimization algorithms that jointly estimate multiple conformal thresholds while guaranteeing monotonicity: exponentiated gradient and projected gradient. 
    
    \item We establish theoretical guarantees for long-run coverage across all coverage levels, together with guarantees ensuring nested prediction sets at every time step.
    
    \item We demonstrate empirically, on both synthetic and real-world datasets, that our methods achieve improved calibration, efficiency, and stability compared to independent single-level conformal approaches.
\end{itemize}

\section{Related Work}\label{section:related}


Conformal prediction has traditionally relied on exchangeability assumptions to obtain finite-sample coverage guarantees \citep{vovk2005algorithmic, shafer2008tutorial}. Under exchangeability, prediction sets can be constructed using empirical quantiles of past nonconformity scores, and nestedness across coverage levels follows naturally when using an appropriate nonconformity score function. However, these guarantees break down in online settings where the data stream may exhibit temporal dependence, distribution shift, or adversarial behavior.

To address this limitation, recent work has extended conformal prediction to the online setting without relying on exchangeability. A key line of work, initiated by \cite{gibbs2021adaptive}, proposes update the conformal threshold $q_t$ via online gradient descent with a fixed step size known as Adaptive Conformal Inference (ACI). This approach provides \emph{long-run coverage guarantees} under arbitrary (potentially adversarial) data sequences. Subsequent works have proposed refinements to improve practical performance, particularly in terms of prediction set size and adaptivity to distributional changes \citep{bhatnagar2023improved, angelopoulos2023conformal, gibbs2024conformal,areces2025online, sun2025conformal}.

More recent developments have explored the use of time-varying or adaptive step sizes to better handle non-stationarity and to bridge adversarial and stochastic regimes. These methods often draw on tools from online learning, such as adaptive regret minimization and expert aggregation, and aim to provide stronger notions of validity beyond long-run coverage \citep{zaffran2022adaptive, bastani2022practical}. Notably, \cite{angelopoulos2024online} and related works establish guarantees that hold simultaneously in adversarial and i.i.d. settings, achieving a “best-of-both-worlds” behavior.

Parallel to this line of work, several approaches have studied conformal prediction under relaxed assumptions on the data-generating process, such as weak dependence or time-series structure \citep{chernozhukov2018exact, barber2023conformal}. While these methods can provide valid coverage under specific assumptions, they generally do not offer distribution-free guarantees in fully adversarial settings.

Despite this progress, existing online conformal prediction methods are designed to operate at a \emph{single} coverage level. When multiple coverage levels are required, standard approaches must be run independently for each $\alpha$, which does not guarantee nested prediction sets and can lead to inefficiencies. In addition, the quantile regression literature has shown that joint estimation of multiple quantiles, with constraints to prevent crossing, improves both coherence and statistical efficiency \citep{zou2008composite, liu2011simultaneous}.

Our work builds on the online optimization perspective of conformal prediction and extends it to the joint estimation of multiple coverage levels. By enforcing structural constraints across quantiles, we obtain nested prediction sets and calibrated uncertainty estimates over the full risk spectrum, a setting that has not been addressed in prior online conformal prediction work.

\section{Setting}\label{section:setting}

Consider a bounded score function $s_t: \cX \times \cY \rightarrow [0, B]$ at each time $t$ and a stream of data $\{(X_t,Y_t)\}_{t=1}^T$. The score functions come from a pre-trained or trained online model (e.g. $\hat{f}_t$) that predicts $y$ using $x$. The function $s_t(x, y)$, often times called non-conformity score, measures how well the value $y$ \textit{conforms} with the predictions of our fitted model. A common example in the regression setting is $s_t(x, y) = |y-\hat{f}_t(x)|$. Given a coverage level $1-\alpha$ and a threshold $q_t(1-\alpha)$, we can define the prediction set as

\begin{equation}\label{eq:pred_set}
    \cC^{1-\alpha}_t(x) = \{y \in \cY: s_t(x, y) \le q_t(1-\alpha) \}
\end{equation} 

Under the exchangeability of the data, and symmetry of the given model fitting algorithm as a function of the data, this definition of prediction sets, with $q_t(1-\alpha)$ the $\frac{[(n+1)(1-\alpha)]}{n} $ quantile of the calibration scores $\{s(X_t,Y_t)\}_{t=1}^T$, the following holds:

$$
\bbP[Y_t \in \mathcal{C}^{1-\alpha}_t\left(X_t\right)] \ge 1-\alpha
$$

However, when these assumptions do not hold, we lose this appealing guarantee. \cite{gibbs2021adaptive} introduce a new method to set the threshold $q_t$, namely ACI. Currently, most of the online conformal prediction algorithms update the threshold $q_t(1-\alpha)$ in an online fashion:

\begin{equation}\label{eq:update_q}    q_{t+1}(1-\alpha)=q_t(1-\alpha)-\eta_t\left(\alpha-\err_t(\alpha)\right), \qquad \err_t(\alpha) = \mathbbm{1}_{Y_t \notin \mathcal{C}^{1-\alpha}_t\left(X_t\right)}
\end{equation}

Note that $\err_t(\alpha)$, the $\alpha$ miscoverage error, is equivalent to $\mathbbm{1}_{s_t > q_t(1-\alpha)}$. As pointed out in previous works, the update rule in Eq.~\eqref{eq:update_q}  does something intuitive, adjusts the threshold $q_t(1-\alpha)$ based on the discrepancy between observed and target miscoverage $\alpha$. Some authors have referred to this update rule as the quantile tracker \citep{angelopoulos2023conformal, angelopoulos2024online}. It has been shown that, although simple, it has long run coverage guarantees, meaning 

$$
\frac{1}{T} \sum_{t=1}^T \mathbbm{1}_{Y_t \in \mathcal{C}_t\left(X_t\right)} \rightarrow 1-\alpha
$$

However, this definition has a subtle caveat. If we run it for different levels of coverage, the prediciton sets are not necessarily nested, which it is undesirable for applications that multiple coverage levels are needed at the same time. Formally, consider an increasing sequence of miscoverage levels $0=\alpha_0<\alpha_1<\alpha_2< \dots <\alpha_K<\alpha_{K+1}=1$. To simplify notation, we denote $\mathcal{C}^{i}_t := \mathcal{C}^{1-\alpha_i}_t\left(X_t\right)$, $q_{t, i} := q_t(1-\alpha_i)$ and $\err_{t,i} := \err_t(\alpha_i)$. 
For every $i \in [K]$, our goal is to control miscoverage while preserving monotonicity in the thresholds $q_{t, i} > q_{t, i+1}$ for all $t\ge1$. In other words, nestedness in the prediction sets $\mathcal{C}^{i+1}_t \subset \mathcal{C}^{i}_t$.

\section{Methods}\label{section:method}


We explore alternative approaches to enforce nestedness of prediction sets across increasing miscoverage levels. Our starting point is the standard online update of quantile thresholds, which it can also be interpreted through an online optimization lens. In particular, the update in Equation \eqref{eq:update_q} can be viewed as an instance of online (sub)gradient descent applied to the pinball (quantile) loss $\rho_t(q, 1-\alpha)=\left(s_t-q\right)\left(\mathbbm{1}_{\left\{s_t>q\right\}}-\alpha\right)$,

\begin{equation}\label{eq:update_q_grad}    q_{t+1}=q_t-\eta_t\nabla\rho_t(q_t, 1-\alpha)
\end{equation}

We can further extend this definition to estimate quantiles for multiple miscoverage levels. Let's define the join loss function as $f_t(\bar{q}) = \sum_{i=1}^K \rho_t(q_i, 1-\alpha_i)$ where $\bar{q} = (q_1, \dots, q_K)$. We can then express the joint update rule as 

\begin{equation}\label{eq:update_q_multi_grad}    
\bar{q}_{t+1}=\bar{q}_t-\eta_t\nabla f_t(\bar{q}_t)
\end{equation}

Note that this update rule is just individually estimating different quantile at the same time, but the estimations are not sharing information and they won't necessarily be monotonic. Using the common online optimization perspective, we can impose constraints that will modify the update rule to guarantee monotonicity.
Let,s revisit the definition of the online mirror descent algorithm

$$
x_{t+1} = \arg \min_{x \in \cX} \eta \langle \nabla f(x_t), x \rangle + D_{\phi}(x,x_t)
$$

Where $D_{\phi}$ is the Bregman divergence generated by a convex function $\phi$. When $\phi(x) = \frac{1}{2} \|x\|_2^2$ and $\cX = \bbR^n$, we recover the usual gradient descent algorithm and then the update rule in Eq. \eqref{eq:update_q_multi_grad}. If we constrain $\cX$ to be a convex set (for example, the set of decreasing sequences in $\bbR^n$) we are under the projected gradient descent (PG). 
$$
x_{t+1} = \Pi_{\cX} \left(x_t- \eta \nabla f(x_t)\right)
$$
Where $\Pi_{\cX}$ denotes the projection on the the convex set $\cX$. Finally, if we set $\phi(x) = \sum_{i=1}^n x_i \log (x_i)$ and $\cX =\{ x \in \mathbb{R}^n \mid x_i \geq 0, \sum_{i=1}^n x_i=1\}$ we end up with exponentiated gradient descent (EG), i.e.
$$
(x_{t+1})_i = \frac{(x_t)_i \exp (-\eta \nabla f(x_t)_i)}{\sum_{j=1}^K (x_t)_j \exp (-\eta \nabla f(x_t)_j)}
$$
In the next section we explain how we can leverage these variations of the online mirror descent algorithm to achieve our goal of enforce monotonicity in the quantiles and how we can leverage classic dynamic regret upper bounds to control the error between the true quantiles and the estimations.

\subsection{Exponentiated Gradient Descent}

We first consider the EG update restricted to the truncated simplex $\Delta_{K+1}^{\mu} = \left\{ w \in \mathbb{R}^{K+1} \mid w_i \geq \mu, \ \sum_{i=0}^K w_i = 1 \right\}$, for some $0 < \mu < \frac{1}{K+1}$. Rather than updating the quantiles directly, we work with the normalized gaps between consecutive quantiles. Specifically, define
\[
w_{t,i} = \frac{d_{t,i}}{B} \in [0,1], \quad \text{where } d_{t,i} = q_{t,i} - q_{t,i+1},
\]
with boundary conditions $q_{t,K+1} = 0$ and $q_{t,0} = B$ for all $t \ge 1$. We initialize $w_{1,i} = \frac{1}{K+1}$ for all $i \ge 0$. This parametrization is equivalent to the change of variables
\[
q_{t,i} = B \sum_{j=i}^K w_{t,j},
\]
which can be written as $\bar{q}_t = J \bar{w}_t$, where $J_{i,j} = B\,\mathbbm{1}_{\{i \le j\}}$. Under this transformation, we can express the loss function in terms of $\bar{w}_t$ as follows:

$$
\begin{aligned}
g_t(\bar{w}) &:= f_t(J\bar{w})\\ & \text{ }= \sum_{i=1}^K \rho_t(B\sum_{j=i}^K w_{j}, 1-\alpha_i)
\end{aligned}
$$

This formulation makes explicit how each coordinate $w_i$ influences all quantiles $\{q_j\}_{j \le i}$, thereby introducing a structured coupling across coverage levels. We now apply the EG algorithm to minimize $g_t$ over the truncated simplex $\Delta_{K+1}^{\mu}$:

\begin{equation}\label{eq:updating_eg}
\tilde w_{t+1,i}
=
w_{t,i}\exp(-\eta (\nabla g_{t})_i),
\qquad
\bar{w}_{t+1}= \arg \min_{\bar{w} \in \Delta_{K+1}^{\mu}} D_{\KL}(\bar{w}, \tilde w_{t+1}),
\end{equation}

The projection step admits a simple closed form.

\begin{proposition}\label{prop:explicit_projection_eg} The KL projection onto the truncated simplex $\Delta_{K+1}^{\mu}$ is given by

$$w_{t+1, i}=\max \left\{\mu, c_t \tilde{w}_{t+1, i}\right\}$$

where

$$
c_t = \frac{1-|S_t^{c}|\mu}{\sum_{i \in S_t} \tilde{w}_{t+1, i}}, \qquad S_t := \{i: \tilde{w}_{t+1, i} \ge \mu \}
$$
\end{proposition}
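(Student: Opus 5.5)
The plan is to characterize the KL projection through its KKT conditions and then check that the candidate
$w_{t+1,i}=\max\{\mu,c_t\tilde w_{t+1,i}\}$, with $c_t$ and $S_t=\{i:\tilde w_{t+1,i}\ge\mu\}$ exactly as in the statement, satisfies them. The KL projection $\min_{\bar w\in\Delta^{\mu}_{K+1}} D_{\KL}(\bar w,\tilde w_{t+1})$ is a strictly convex problem over a nonempty polytope, since $\mu<\frac{1}{K+1}$. Hence it has a unique minimizer, and the KKT conditions are necessary and sufficient. The Lagrangian is
$\sum_i w_i\log(w_i/\tilde w_{t+1,i}) - \sum_i w_i + \lambda(\sum_i w_i-1)-\sum_i\nu_i(w_i-\mu)$ with $\nu_i\ge 0$. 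Stationarity gives $\log(w_i/\tilde w_{t+1,i})=-\lambda+\nu_i$, so $w_i=c\,\tilde w_{t+1,i}e^{\nu_i}$ with $c=e^{-\lambda}$. Complementary slackness then gives two cases. Either $w_i>\mu$, so $\nu_i=0$ and $w_i=c\tilde w_{t+1,i}$. Or $w_i=\mu$, in which case $\nu_i\ge0$ forces $c\tilde w_{t+1,i}\le\mu$. Combining these, the minimizer has the form $w_i=\max\{\mu,c\tilde w_{t+1,i}\}$ for a single scalar $c>0$.

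Next I would pin down $c$ and the active set. Let $A$ be the set of coordinates not clamped at $\mu$. Summing to one gives $c\sum_{i\in A}\tilde w_{t+1,i}+(K+1-|A|)\mu=1$, which is exactly the stated formula for $c_t$ once $A$ is identified with $S_t$. So the candidate in the statement is the projection precisely when two conditions hold:
\[
c_t\tilde w_{t+1,i}\ge\mu \ \text{ for } i\in S_t, \qquad c_t\tilde w_{t+1,i}\le\mu \ \text{ for } i\notin S_t .
\]
Equivalently, $\mu/\min_{i\in S_t}\tilde w_{t+1,i}\le c_t\le \mu/\max_{i\notin S_t}\tilde w_{t+1,i}$. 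Under these inequalities the candidate sums to one, lies in $\Delta^{\mu}_{K+1}$, and satisfies stationarity with $\nu_i=\log(\mu/(c_t\tilde w_{t+1,i}))\ge0$ on $S_t^c$. By uniqueness it is then the projection.

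The main obstacle is verifying this sandwich inequality, and it does not follow from the definition of $S_t$ alone. Since $\min_{S_t}\tilde w_{t+1,i}\ge\mu>\max_{S_t^c}\tilde w_{t+1,i}$, the condition certainly holds when $c_t=1$. Because $\tilde w_{t+1}$ is an unnormalized exponential reweighting, however, $c_t$ can be above or below $1$. I would first try to exploit the structure of the update: $w_t\in\Delta^{\mu}_{K+1}$, and $\tilde w_{t+1,i}=w_{t,i}e^{-\eta(\nabla g_t)_i}$ with $|(\nabla g_t)_i|\le BK$. This should show that, for $\eta$ small, the multiplicative perturbation cannot move any coordinate across the threshold $\mu$ relative to the others. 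If that bound cannot be made uniform, I would state the additional hypothesis explicitly, namely that the sandwich holds. Otherwise I would replace $S_t$ by the fixed point obtained by sorting $\tilde w_{t+1}$ and adding coordinates to the clamped set one at a time until consistency is reached, noting that the literal $S_t$ is the first iterate of that procedure.
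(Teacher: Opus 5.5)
Your KKT computation is the paper's argument: stationarity gives $w_i=c\,\tilde w_{t+1,i}e^{\nu_i}$, and complementary slackness with $\nu_i\ge 0$ gives $w_i=\max\{\mu,c\,\tilde w_{t+1,i}\}$. The paper then only notes that $c$ is the unique root of $\sum_i\max\{\mu,c\,\tilde w_{t+1,i}\}=1$ and declares the claim proved. It never relates that root to $c_t$, or the clamped set to $S_t^c$. Your sandwich condition is exactly the missing link, and you are right that it does not follow. In fact it cannot be proved, because the proposition is false as written. Since $\max\{\mu,x\}$ dominates both $x$ and $\mu$, the displayed vector always satisfies $\sum_i\max\{\mu,c_t\tilde w_{t+1,i}\}\ge c_t\sum_{i\in S_t}\tilde w_{t+1,i}+|S_t^c|\mu=1$, with equality precisely when your sandwich holds. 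So it lies in $\Delta^{\mu}_{K+1}$ (and then, by your KKT check, is the projection) if and only if the sandwich holds. For instance, take $\mu=0.1$ and $\tilde w_{t+1}=(0.1,0.2,0.8)$. Then $S_t=\{0,1,2\}$ and $c_t=10/11$, and the formula returns $(0.1,2/11,8/11)$, of total mass about $1.009$. The true projection is $(0.1,0.18,0.72)$, with $c=0.9$. If all $\tilde w_{t+1,i}<\mu$, then $S_t=\emptyset$ and $c_t$ is not even defined.

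Your first fallback, a small-$\eta$ argument, cannot rescue the statement: the failure occurs for every $\eta>0$ at boundary iterates, and those are exactly what the clamping produces. Since $w_0$ does not enter $g_t$, $(\nabla g_t)_0=0$. Take $w_{t,0}=\mu$ and $s_t>q_{t,1}=B(1-\mu)$. Then $\err_{t,j}=1$ for all $j$, so each $(\nabla g_t)_i$ with $i\ge 1$ is $B$ times a nonempty sum of terms $\alpha_j-1<0$. This gives $\tilde w_{t+1,i}>w_{t,i}\ge\mu$ for $i\ge1$ and $\tilde w_{t+1,0}=\mu$, hence $S_t=\{0,\dots,K\}$ and $c_t=1/\sum_i\tilde w_{t+1,i}<1$. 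Coordinate $0$ is then clamped even though it was counted in $S_t$, and the output has mass at least $1+(1-c_t)\mu>1$.

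What is actually true is the implicit statement that your KKT argument, and the paper's, establishes. Your second fallback (assume the sandwich) is valid but makes the proposition a tautology. Your third fallback is the right fix, with one correction to the search direction. At $c=1$ the left side of the normalization equation equals $\sum_{i\in S_t}\tilde w_{t+1,i}+|S_t^c|\mu$, so the true root satisfies $c^*<1$ if and only if $c_t<1$. In that case the true active set is contained in $S_t$, and clamping further coordinates works. When $c_t>1$, however, the active set contains $S_t$, and coordinates of $S_t^c$ must be released instead. For example, with $\mu=0.1$ and $\tilde w_{t+1}=(0.5,0.3,0.09)$ you get $c_t=1.125$ and $c_t\tilde w_{t+1,2}>\mu$, while the projection is $\tilde w_{t+1}/0.89$ with nothing clamped. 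Two methods handle both cases: searching over the $K+1$ active sets of the form ``the $m$ largest coordinates'' after sorting, or solving the monotone scalar equation for $c$ directly.
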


Furthermore, we can calculate the gradient using the chain rule as $ \nabla g_{t} = J^{\top} \nabla f_{t} $ which yields the coordinate-wise expression

\[
(\nabla g_t)_i
=
\sum_{j=1}^K \frac{\partial \rho_t(q_{t,j},1-\alpha_j)}{\partial q_{t,j}}
\frac{\partial q_{t,j}}{\partial w_{t,i}}
=
B\sum_{j=i}^K (\alpha_j-\err_{t,j}),
\]

since $\partial_{q_{t,j}} \rho_t(q_{t,j},1-\alpha_j)=\alpha_j-\err_{t,j}$ and $\frac{\partial q_{t,j}}{\partial w_{t,i}} = B\,\mathbbm{1}_{\{i \leq j\}}$.

This reveals a key feature of the EG update: it induces a \emph{global coupling} across quantile levels. Each coordinate $w_{t,i}$ controls the gap between consecutive quantiles and influences all prediction sets up to level $i$. Correspondingly, the gradient $(\nabla g_t)_i$ aggregates the miscoverage discrepancies across these levels. When the method is overly conservative (i.e., $\err_{t,j} < \alpha_j$ on average), the cumulative term is positive and the corresponding gap is reduced; when it undercovers, the gap is increased. As a result, the EG update reallocates mass across quantile gaps in a coordinated manner, adjusting the entire quantile curve simultaneously while preserving nestedness by construction. This global interaction contrasts with projection-based approaches, where corrections are applied locally. 

Standard regret guarantees for EG, combined with mild assumptions on the distribution of $s_t$, allow us to control the deviation of the estimated quantiles from their targets.

\begin{theorem}\label{theorem:l2_quantiles_eg}
    Let $s_1, \dots, s_T$ be a sequence of random variables with densities $p_t(s) > p > 0$. Then, the quantiles $\bar{q}_t = J \bar{w}_t$ using the updating rule in Eq. \eqref{eq:updating_eg} satisfy

\begin{equation}
    \frac{1}{T} \sum_{t=1}^T \frac{p\left\|\bar{q}_t-\bar{q}_t^*\right\|_2^2}{2} \leq \frac{(1+\log(1/\mu))(1+V_T^w)}{\eta T} + (B K)^2 \frac{\eta}{2}
\end{equation}

Where $q_{t,i}^*$ is such that $\mathbb{P}\left(s_t<q_{t,i}^*\right)=1-\alpha_i$ and $V_T^{w} = \sum_{t=1}^{T-1} \|w_{t+1}^*-w_{t}^*\|_1$.

\end{theorem}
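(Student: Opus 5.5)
The proof combines a dynamic regret bound for exponentiated gradient on the truncated simplex with a strong-convexity-type lower bound on the expected excess pinball loss.

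\textbf{Step 1 (curvature of the expected loss).} For each $t$, let $F_t(\bar q)=\mathbb{E}_{s_t}[f_t(\bar q)]=\sum_{i=1}^K \mathbb{E}[\rho_t(q_i,1-\alpha_i)]$. The derivative of $q\mapsto \mathbb{E}[\rho_t(q,1-\alpha_i)]$ is $\alpha_i-\mathbb{P}(s_t>q)=F_{s_t}(q)-(1-\alpha_i)$. It vanishes at $q^*_{t,i}$, and its derivative is $p_t(q)\ge p$. Integrating twice gives
\[
\mathbb{E}\rho_t(q,1-\alpha_i)-\mathbb{E}\rho_t(q^*_{t,i},1-\alpha_i)\ \ge\ \tfrac{p}{2}(q-q^*_{t,i})^2 .
\]
Summing over $i$ yields $F_t(\bar q_t)-F_t(\bar q_t^*)\ge \tfrac p2\|\bar q_t-\bar q_t^*\|_2^2$. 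Since $\bar q_t$ depends only on $s_1,\dots,s_{t-1}$, this holds conditionally on the past. So the left-hand side of the theorem is bounded by $\frac1T\sum_t \mathbb{E}[g_t(\bar w_t)-g_t(\bar w_t^*)]$, where $\bar w^*_t$ is defined by $\bar q^*_t=J\bar w^*_t$. This tacitly requires the true quantiles to be decreasing, to lie in $[0,B]$, and to have gaps at least $\mu B$, so that $\bar w_t^*\in\Delta^\mu_{K+1}$. I would state this as the implicit comparator assumption, since the theorem is otherwise stated loosely.

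\textbf{Step 2 (linearization).} By convexity of $g_t$, $g_t(\bar w_t)-g_t(\bar w_t^*)\le\langle\nabla g_t(\bar w_t),\bar w_t-\bar w^*_t\rangle$. This reduces the problem to a dynamic linear regret bound for EG with gradients satisfying
\[
\|\nabla g_t\|_\infty\le B\sum_{j}|\alpha_j-\err_{t,j}|\le BK .
\]

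\textbf{Step 3 (dynamic regret of EG on $\Delta^\mu_{K+1}$).} This is the standard mirror-descent analysis with the negative-entropy potential. The unprojected step gives
\[
\eta\langle \nabla g_t,\bar w_t-u\rangle\le D_{\KL}(u,\bar w_t)-D_{\KL}(u,\tilde w_{t+1})+\tfrac{\eta^2}{2}\|\nabla g_t\|_\infty^2 ,
\]
using $1$-strong convexity of entropy in $\ell_1$ (or Hoeffding's lemma). The generalized Pythagorean inequality for the KL projection of Proposition~\ref{prop:explicit_projection_eg} replaces $\tilde w_{t+1}$ by $\bar w_{t+1}$. With $u=\bar w^*_t$ varying in $t$, the sum no longer telescopes, and the correction terms are
\[
D_{\KL}(\bar w^*_{t+1},\bar w_{t+1})-D_{\KL}(\bar w^*_t,\bar w_{t+1})=\sum_i (w^*_{t+1,i}\log w^*_{t+1,i}-w^*_{t,i}\log w^*_{t,i})+\sum_i(w^*_{t,i}-w^*_{t+1,i})\log\tfrac{1}{w_{t+1,i}} .
\]
The entropy part telescopes and is bounded overall by $\log(K+1)\le\log(1/\mu)$. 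The second part is at most $\log(1/\mu)\|\bar w^*_{t+1}-\bar w^*_t\|_1$, because $w_{t+1,i}\ge\mu$; this is exactly why truncation is needed. The initial term is $D_{\KL}(\bar w_1^*,\bar w_1)\le\log(K+1)\le \log(1/\mu)$. Collecting terms gives
\[
\sum_t\langle\nabla g_t,\bar w_t-\bar w_t^*\rangle\le \frac{(1+\log(1/\mu))(1+V^w_T)}{\eta}+\frac{\eta}{2}(BK)^2T .
\]
Dividing by $T$ and combining with Steps 1--2 then yields the claim.

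The main obstacle is Step 3's handling of the moving comparator, namely bounding the non-telescoping KL differences by $\log(1/\mu)V^w_T$. This relies crucially on the lower bound $w_{t+1,i}\ge\mu$ enforced by the projection. A secondary subtlety is Step 1, because the result holds in expectation over the randomness of $s_t$ given the past. I would present the bound in conditional expectation, with the left side understood as expected or conditional excess risk.
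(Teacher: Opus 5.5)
Your proposal is correct and follows essentially the same route as the paper. Both arguments combine the curvature bound of Proposition~\ref{prop:lower_bound_regret}, linearization by convexity of $g_t$, and a dynamic mirror-descent regret bound for EG on $\Delta^{\mu}_{K+1}$, where the truncation $w_{t+1,i}\ge\mu$ controls the comparator-drift KL terms. Your variants are minor: the unprojected step with Hoeffding plus the generalized Pythagorean inequality, and an entropy/cross-entropy split in place of the paper's Lipschitz-in-the-first-argument bound. You are also more explicit than the paper about conditioning on the past and about the implicit requirement $\bar w_t^*\in\Delta^{\mu}_{K+1}$.

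One bookkeeping point affects the stated constant. Let $H$ denote Shannon entropy. You should add the initial term $D_{\KL}(\bar w_1^*\|\bar w_1)=\log(K+1)-H(\bar w_1^*)$ to the telescoped entropy $H(\bar w_1^*)-H(\bar w_T^*)$ so that $H(\bar w_1^*)$ cancels. Bounding the two pieces separately, each by $\log(K+1)$, only gives $2\log(K+1)$, which need not be below $1+\log(1/\mu)$ when $V_T^w$ is small. Separately, the sign of your cross-entropy term is flipped, which is harmless since you bound it in absolute value.
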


Similar to \cite{gibbs2024conformal}, if we additional assume the map $q_i \rightarrow \bbP \left( s_t \le q_i | \{s_r\}_{r<t} \right) $ is $L$-Lipschitz, then $|\bbP \left( s_t \le q_i | \{s_r\}_{r<t} \right) - (1-\alpha_i)|< L|q_{t,i} - q_{t,i}^*|$, then Theorem \ref{theorem:l2_quantiles_eg} translates to coverage error bound.

\subsection{Projected Gradient Descent}

For our second method, we will use a modified version of projected gradient descent such that we keep a minimum difference between quantiles to avoid collapsing to single points. We first update the whole quantile vector according to the gradient. Finally, we project it onto the set with decreasing elements.

\begin{equation}\label{eq:updating_proj}
    \Tilde{q}_{t+1} = \bar{q}_{t} - \eta \left(\bar{\alpha}-\overline{\err}_t\right)  \qquad \bar{q}_{t+1} = \Pi_{\cQ}(\Tilde{q}_{t+1})
\end{equation}

Where $\overline{\err}_t = (\err_{t, 1}, \dots, \err_{t, K})$ and $\cQ = \{\bar{q}: B\ge q_1\ge\dots\ge q_K\ge0\}$. The projection $\Pi_{\cQ}$ reduces to the PAVA algorithm \citep{ayer1955empirical}.

In contrast to EG, the PG method operates via local correction: it first performs independent gradient updates on each quantile and then enforces monotonicity through a projection step. This projection only adjusts neighboring quantiles as needed to restore the ordering constraint, without explicitly redistributing information across all levels. Consequently, PG ensures feasibility (nestedness) but lacks the intrinsic global coordination present in EG, which can limit its ability to exploit shared structure across coverage levels.

\begin{theorem}\label{theorem:l2_quantiles_projected}
    Let $s_1, \dots, s_T$ be a sequence of random variables with densities $p_t(s) > p > 0$. Then, the quantiles $\bar{q}_t$ from the update rule in Eq. \eqref{eq:updating_proj} satisfy

\begin{equation}
    \frac{1}{T} \sum_{t=1}^T \frac{p \left\|\bar{q}_t-\bar{q}_t^*\right\|_2^2}{2} \leq \frac{3B^2 K(1+V_T^q)}{\eta T} + \eta K 
\end{equation}

Where $q_{t,i}^*$ is such that $\mathbb{P}\left(s_t<q_{t,i}^*\right)=1-\alpha_i$ and $V_T^{q} = \sum_{t=1}^{T-1} \|q_{t+1}^*-q_{t}^*\|_1$

\end{theorem}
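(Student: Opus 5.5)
The plan is a dynamic-regret argument for projected online gradient descent on the convex set $\cQ$. The stochastic structure enters through a strong-convexity-type inequality for the expected pinball loss. We compare $\bar q_t$ with the moving comparator $\bar q_t^*$. By assumption $q^*_{t,1}\ge\dots\ge q^*_{t,K}$ lie in $[0,B]$, since the $\alpha_i$ are increasing, so $\bar q_t^*\in\cQ$. Write $\bar\alpha-\overline{\err}_t=\nabla f_t(\bar q_t)$, and let $F_t(\bar q)=\mathbb{E}_t[f_t(\bar q)]$ be the expectation over $s_t$ given the past. Two facts about $F_t$ will be used:
\[
\nabla F_t(\bar q)_i=\mathbb{P}(s_t\le q_i)-(1-\alpha_i),
\qquad
\nabla F_t(\bar q_t^*)=0 .
\]

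\emph{Step 1 (one-step inequality).} The projection onto $\cQ$ is nonexpansive and $\bar q_t^*\in\cQ$, so
\[
\|\bar q_{t+1}-\bar q_t^*\|^2\le \|\bar q_t-\bar q_t^*\|^2-2\eta\langle \nabla f_t(\bar q_t),\bar q_t-\bar q_t^*\rangle+\eta^2\|\nabla f_t(\bar q_t)\|^2 .
\]
Each coordinate of the gradient lies in $[-1,1]$, so $\|\nabla f_t\|^2\le K$. This is where the $\eta K$ term in the bound comes from.

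\emph{Step 2 (moving comparator).} We switch from $\bar q_t^*$ to $\bar q_{t+1}^*$ in the distance term. Expanding the square and using that every point of $\cQ$ lies in $[0,B]^K$,
\[
\|\bar q_{t+1}-\bar q_{t+1}^*\|^2\le\|\bar q_{t+1}-\bar q_t^*\|^2+2\sqrt K B\,\|\bar q_{t+1}^*-\bar q_t^*\|_2+\|\bar q_{t+1}^*-\bar q_t^*\|_2^2 .
\]
We then bound each Euclidean distance by the $\ell_1$ distance, and the squared term by $\sqrt K B$ times the $\ell_1$ distance. Summing over $t$ and telescoping leaves an initial distance $\le KB^2$ and a path term of order $B\sqrt K\,V_T^q$. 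Both are dominated by $3B^2K(1+V_T^q)$, assuming $B\ge1$ or otherwise absorbing constants as the stated bound suggests.

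\emph{Step 3 (from linearized regret to squared error).} We take conditional expectations, using that $\bar q_t$ and $\bar q_t^*$ are determined by the past. This gives
\[
\mathbb{E}_t\langle\nabla f_t(\bar q_t),\bar q_t-\bar q_t^*\rangle=\langle\nabla F_t(\bar q_t),\bar q_t-\bar q_t^*\rangle .
\]
The density lower bound $p$ makes $F_t$ $p$-strongly convex on $[0,B]^K$. Together with $\nabla F_t(\bar q_t^*)=0$, this yields
\[
\langle\nabla F_t(\bar q_t),\bar q_t-\bar q_t^*\rangle\ge p\|\bar q_t-\bar q_t^*\|^2,
\]
which is at least $\tfrac p2\|\bar q_t-\bar q_t^*\|^2$. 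Dividing the summed inequality by $2\eta T$ gives the claim, interpreted in expectation. The left side of the theorem is read as expected or conditional, as in the EG theorem.

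\emph{Main obstacle.} The delicate step is the comparator-drift bound. The $\ell_2$ versus $\ell_1$ conversion must produce exactly the $3B^2K$ constant. If the cross term gives $\sqrt K$ instead of $K$, I will simply use $\sqrt K\le K$. I would also check that the realized gradient can be replaced by its conditional expectation. The bound is then either an expectation bound, or a martingale-difference term must be added, which the statement appears to suppress.
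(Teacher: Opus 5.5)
Your argument is correct when read as a bound in expectation, which is also all the paper's proof actually delivers. It is organized differently from the paper's proof.

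\textbf{The paper's route.} The paper cites a function-value dynamic-regret bound for projected gradient descent (Lemma~\ref{lemma:dyn_regret_proj}). It then converts regret into squared error through the quadratic-growth inequality of Proposition~\ref{prop:lower_bound_regret}, $\mathbb{E}[\rho_t(q)]-\mathbb{E}[\rho_t(q^*)]\ge \tfrac p2 (q-q^*)^2$.

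\textbf{Your route.} You derive the dynamic regret of the linearized losses directly. You then convert via strong monotonicity of the conditional expected subgradient, $\langle \nabla F_t(\bar q_t),\bar q_t-\bar q_t^*\rangle\ge p\|\bar q_t-\bar q_t^*\|_2^2$, using $\nabla F_t(\bar q_t^*)=0$. This has three effects:
\begin{itemize}
\item It skips the convexity step from linearized to function-value regret.
\item It gains a factor of $2$.
\item It makes the conditioning on the past explicit. The paper glosses over this when it treats $q_{t,i}$ as fixed inside $\mathbb{E}[\rho_t(q_{t,i},1-\alpha_i)]$ and then passes to a full expectation. So $\bar q_t^*$ and the density bound must be read conditionally on the past, or the $s_t$ taken independent.
\end{itemize}

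\textbf{The constant.} Your explicit computation shows the natural form of the bound, $\frac{KB^2+3\sqrt K B\,V_T^q}{4\eta T}+\frac{\eta K}{4}$. Its path term is linear in the diameter $\sqrt K B$. It lies below the stated $3B^2K(1+V_T^q)/(\eta T)+\eta K$ only when $\sqrt K B\ge 1/4$.

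This caveat is not a defect of your route. The standard projected-OGD analysis, which you reproduce, gives a comparator-dependent part $(D^2+3DP)/(2\eta)$. Lemma~\ref{lemma:dyn_regret_proj} instead writes $3D^2(1+P)/\eta$. So the paper's proof hides exactly the same scale assumption you flagged.
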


\section{Experiments}\label{section:exp}

In this section we evaluate our methods as well as the previously proposed quantile tracker and naive adaptations to enforce nestedness. We measure three key properties:
(i) calibration across the full risk spectrum,
(ii) nestedness of prediction sets, and
(iii) statistical efficiency of joint quantile estimation. We evaluate performance across multiple coverage levels 
$\{\alpha_i\}_{i=1}^K$ using the following metrics:

\subsection{Metrics}\label{subsection:metrics}

\textbf{Full risk spectrum calibration error.} 
This metric evaluates the deviation between the empirical miscoverage rate and the target level $\alpha_i$ across all coverage levels. While the quantile tracker is expected to perform well under this metric, it is known to exhibit overcorrection behavior \citep{angelopoulos2024online}. In particular, the algorithm may alternate between periods of undercoverage and overly conservative predictions, producing wider intervals that compensate for past errors. As a result, a low average calibration error can be misleading, as it may reflect oscillatory behavior rather than stable and well-calibrated uncertainty estimates.

$$
\mathrm{CE}_i
=
\left|
\frac{1}{T}\sum_{t=1}^T 
\mathbbm{1}\{Y_t \notin \mathcal{C}_t^{i}(X_t)\}
-
\alpha_i
\right|
$$

\textbf{Quantile estimation error.} 
In the simulated setting, the true quantiles are known, allowing us to directly compare them with the online estimates. This metric evaluates how accurately each method tracks the underlying quantile dynamics over time. In contrast to calibration error, which reflects long-run coverage, this measure provides a more direct assessment of statistical efficiency and the ability to adapt to stochastic variability in the data-generating process.

$$
\|q_t - q_t^*\|_1 = \sum_{i=1}^K |q_{t,i} - q_{t,i}^*|
$$

\textbf{Prediction set size.}
We calculate the rolling average width of $\mathcal{C}_t^i$ across $t$:, this jointly 

$$\sum_{t-w}^t|2q_s|$$

\textbf{Nestedness violations.}
Gap between consecutive quantiles $q_{t,i} - q_{t,i+1}$.
This will be positive for all $t, i$ and all methods except the standard quantile tracking which does not guarantee nestedness.


\subsection{Baselines}\label{subsection:baselines}

We compare our methods against two baselines: the classical quantile tracker and a projected quantile tracker. The first baseline applies the quantile tracker update in Eq.~\eqref{eq:update_q_multi_grad} independently across coverage levels. As discussed earlier, this update rule does not enforce monotonicity across quantiles and therefore does not guarantee nested prediction sets.

The second baseline is a simple extension of the quantile tracker that enforces monotonicity via projection. Specifically, after performing the standard update in Eq.~\eqref{eq:update_q_multi_grad}, the resulting quantiles are projected onto the set of monotone sequences. While this approach guarantees nestedness, it does not incorporate any coupling between quantile levels during the update step, and therefore does not fully exploit the benefits of joint estimation.

\begin{equation}
    \Tilde{q}_{t+1} = \Tilde{q}_{t} + \eta \left(\overline{\err}_t-\bar{\alpha}\right), \qquad \bar{q}_{t+1} = \Pi_{\cQ}(\Tilde{q}_{t+1})
\end{equation}

Note that this approach differs from our projected gradient method. Here, the quantiles are updated independently using the gradient step, and only the current iterate is projected to enforce monotonicity. While this guarantees nested prediction sets, the projection acts as a post-processing step and does not introduce coupling across quantile levels during the update. As a result, it does not benefit from the joint estimation structure that our method exploits.

\subsection{Synthetic Data: Uniform scores with reflected random-walk drift}

To evaluate the ability of our online conformal procedures to track time-varying quantiles, we consider a synthetic setting in which the score distribution evolves according to a latent random walk with reflecting boundaries.

Let $(z_t)_{t\ge1}$ be a latent process taking values in a bounded interval $[a,b]$, initialized at $z_1 \in [a,b]$, and evolving as
\begin{equation}
\tilde z_{t+1} = z_t + \sigma \varepsilon_t,
\qquad
\varepsilon_t \stackrel{\mathrm{iid}}{\sim} \mathcal N(0,1),
\end{equation}
followed by reflection onto $[a,b]$:

\begin{equation}
z_{t+1} =
\begin{cases}
2a - \tilde z_{t+1}, & \tilde z_{t+1} < a,\\[4pt]
2b - \tilde z_{t+1}, & \tilde z_{t+1} > b,\\[4pt]
\tilde z_{t+1}, & \text{otherwise}.
\end{cases}
\end{equation}

\begin{figure}[htbp]
    \centering
    \begin{subfigure}[t]{0.5\textwidth}
        \centering
        \includegraphics[width=\linewidth]{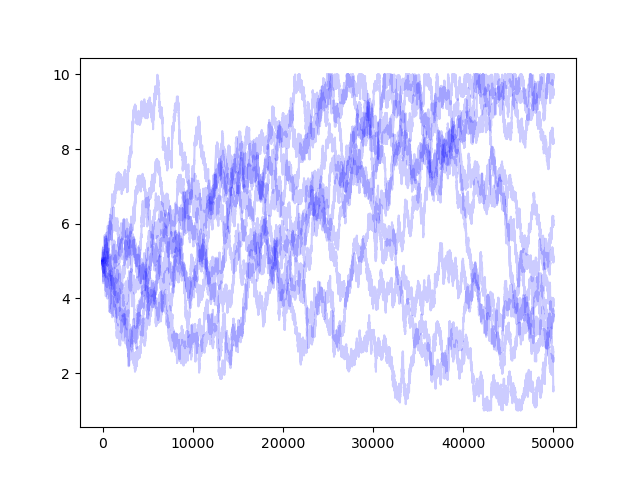}
    \end{subfigure}
    
    \caption{Simulated trajectory of the latent process $(z_t)$. The process remains confined to the interval while exhibiting gradual drift, inducing a time-varying score distribution in the synthetic experiment.} 
    \label{fig:z_paths}
\end{figure}

If multiple reflections are required (i.e., when $|\tilde z_{t+1}-z_t|$ is large), the reflection step is applied iteratively until $z_{t+1} \in [a,b]$. This construction yields a random walk that behaves locally like a Gaussian walk but is confined to a compact domain. Example simulated trajectories of $(z_t)$ are shown in Figure~\ref{fig:z_paths}. Conditional on $z_t$, the conformity score is generated as

\begin{equation}
s_t \mid z_t \sim \mathrm{Unif}\!\left[z_t-\frac{w}{2},\, z_t+\frac{w}{2}\right],
\end{equation}

where $w>0$ controls the width of the score distribution. Thus, the score distribution has fixed spread but a time-varying center governed by the reflected random walk which induces a non-stationary environment. Hence, for any target miscoverage level $\alpha \in (0,1)$, the ground truth $(1-\alpha)$-quantile is available in closed form:
\begin{equation}
q_t^*(1-\alpha)
=
z_t+\frac{w}{2}-\alpha w.
\end{equation}
For a collection of levels $\alpha_1 < \cdots < \alpha_K$, the true quantiles are
\begin{equation}
q_{t,i}^*
=
z_t+\frac{w}{2}-\alpha_i w,
\qquad i=1,\dots,K.
\end{equation}

This setting provides a controlled form of nonstationarity. The parameter $\sigma$ governs the local variability of the latent process, while the reflecting boundaries ensure that $z_t$ remains in $[a,b]$, preventing unbounded drift. As a result, the score distribution continuously evolves over time but within a fixed range, yielding a stable yet nonstationary benchmark.

A key advantage of a synthetic data experiment is that the true quantiles are known exactly, allowing us to directly assess the tracking performance of the online estimates. In particular, we evaluate the rolling average tracking error

\begin{equation}
\frac{1}{t-dt}\sum_{s=t-dt}^t \|\bar q_s-\bar q_s^*\|_1,
\end{equation}

Calibration is assessed through empirical coverage at each level using the indicators $\mathbbm{1}\{s_t \le q_{t,i}\}$.

We fix the reflecting boundaries to $a=0.5$ and $b=9.5$, and initialize the latent process at $z_1=5$. The random-walk scale is set to $\sigma=0.025$. The score distribution has fixed width $w=1$. We consider $K=9$ coverage levels given by
\[
\alpha_i = 0.1\, i, \qquad i=1,\dots,9,
\]
corresponding to target coverages $1-\alpha_i \in \{0.9,0.8,\dots,0.1\}$. The simulation is run over $t \in \{1,\dots,50{,}000\}$, and performance metrics are computed using evaluation windows of size $dt = 10{,}000$. These choices produce a moderate level of drift while keeping the score distribution well within the interval $[0, 10]$, ensuring stable behavior throughout the simulation.

Figure~\ref{fig:l1_dist} shows that the EG update achieves a lower $\ell_1$ tracking error, supporting the hypothesis that sharing information across quantiles improves statistical efficiency. In contrast, the projected gradient method closely mirrors the behavior of the independent quantile tracker, indicating limited gains from joint optimization in this setting.

\begin{figure}[htbp]
    \centering
    \begin{subfigure}[t]{0.6\textwidth}
        \centering
        \includegraphics[width=\linewidth]{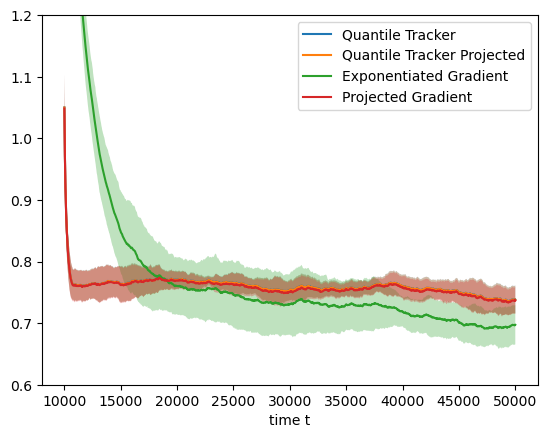}
    \end{subfigure}
    
    \caption{Rolling average of the $\ell_1$ distance between $q_t$ and $q_t^*$} 
    \label{fig:l1_dist}
\end{figure}

\subsection{Real Data: US Inflation}

We evaluate our methods on the US Consumer Price Index (CPI) for All Urban Consumers, published by the Bureau of Labor Statistics and distributed via FRED \citep{bls_cpiaucsl}. Inflation forecasting is a natural application for our framework, as policymakers and market participants often require uncertainty estimates at multiple confidence levels to support decisions under heterogeneous risk preferences.

We use monthly CPI data from 1950 to 2025 and construct the yearly inflation series as
\[
y_t = \frac{\text{CPI}_t - \text{CPI}_{t-12}}{\text{CPI}_{t-12}}.
\]
To generate predictions, we employ an autoregressive model of order three, $\mathrm{AR}(3)$, which captures short-term temporal dependencies in the inflation series. At each time $t$, we fit the model
\[
y_t = \beta_0 + \beta_1 y_{t-1} + \beta_2 y_{t-2} + \beta_3 y_{t-3} + \varepsilon_t,
\]
using a rolling window of the previous five years of monthly observations (i.e., 60 data points). The model is retrained sequentially at each time step, and the fitted model is used to produce a one-step-ahead prediction $\hat{y}_{t+1}$. This rolling-window procedure allows the predictive model to adapt to evolving dynamics in the inflation process, reflecting its non-stationary nature.

We use as the nonconformity score the absolute prediction error,
\[
s_t = |y_t - \hat{y}_t|,
\]
which measures the deviation between observed and predicted inflation. These scores are then used to construct prediction sets of the form
\[
\mathcal{C}_t^{i} = \{ y \in \mathbb{R} : |y - \hat{y}_t| \leq q_{t,i} \},
\]
where the thresholds $q_{t,i}$ are updated online using the proposed methods corresponding to the coverage level $1-\alpha_i$ for $i \in [K]$. We consider $K=99$ coverage levels given by
\[
\alpha_i = 0.01\, i, \qquad i=1,\dots,99,
\]
corresponding to target coverages $1-\alpha_i \in \{0.99,0.8,\dots,0.01\}$.

\begin{figure}[htbp]
    \centering
    \begin{subfigure}[t]{0.6\textwidth}
        \centering
        \includegraphics[width=\linewidth]{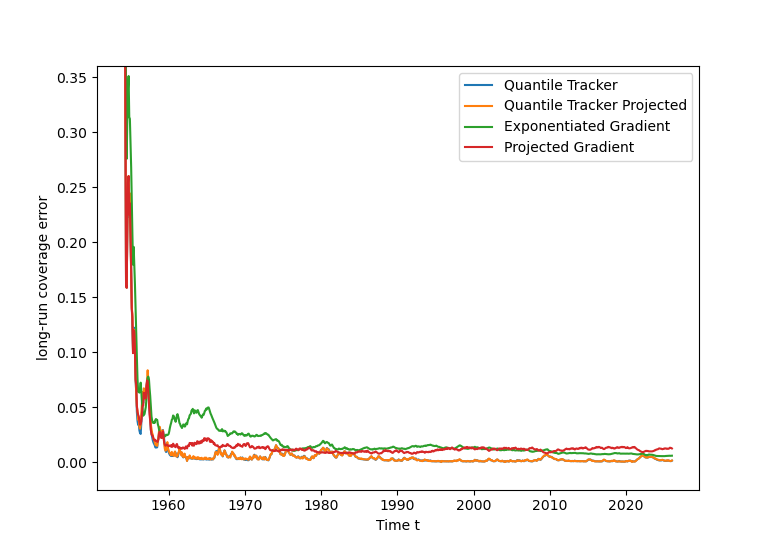}
    \end{subfigure}
    
    \caption{Cumulative average of the sum of mis coverage errors $\text{CE}_i$} 
    \label{fig:cov_error_inflation}
\end{figure}

Figure~\ref{fig:cov_error_inflation} reports the cumulative average calibration error aggregated across all miscoverage levels $\{\alpha_i\}_{i=1}^K$. 

$$\sum_{i=1}^K \left|
\frac{1}{T}\sum_{t=1}^T 
\mathbbm{1}\{Y_t \notin \mathcal{C}_t^{i}(X_t)\}
-
\alpha_i
\right|$$

Specifically, at each time $t$ we compute the average deviation between the empirical miscoverage and the target levels, and report its cumulative average over time. This metric captures how well each method calibrates the entire risk spectrum, rather than a single coverage level. We observe that all methods eventually achieve small calibration error, indicating convergence to the desired long-run coverage. However, their transient behaviors differ. The unconstrained quantile tracker converges more rapidly toward zero, which is consistent with its tendency to overcorrect by alternating between undercoverage and overly conservative intervals. In contrast, the proposed methods exhibit slightly slower but more stable trajectories. This suggests that enforcing structural constraints to preserve nestedness introduces a controlled trade-off, leading to more stable calibration dynamics while maintaining accurate long-run coverage.

\begin{figure}[htbp]
    \centering
    
    \begin{subfigure}{0.4\textwidth}
        \includegraphics[width=\linewidth]{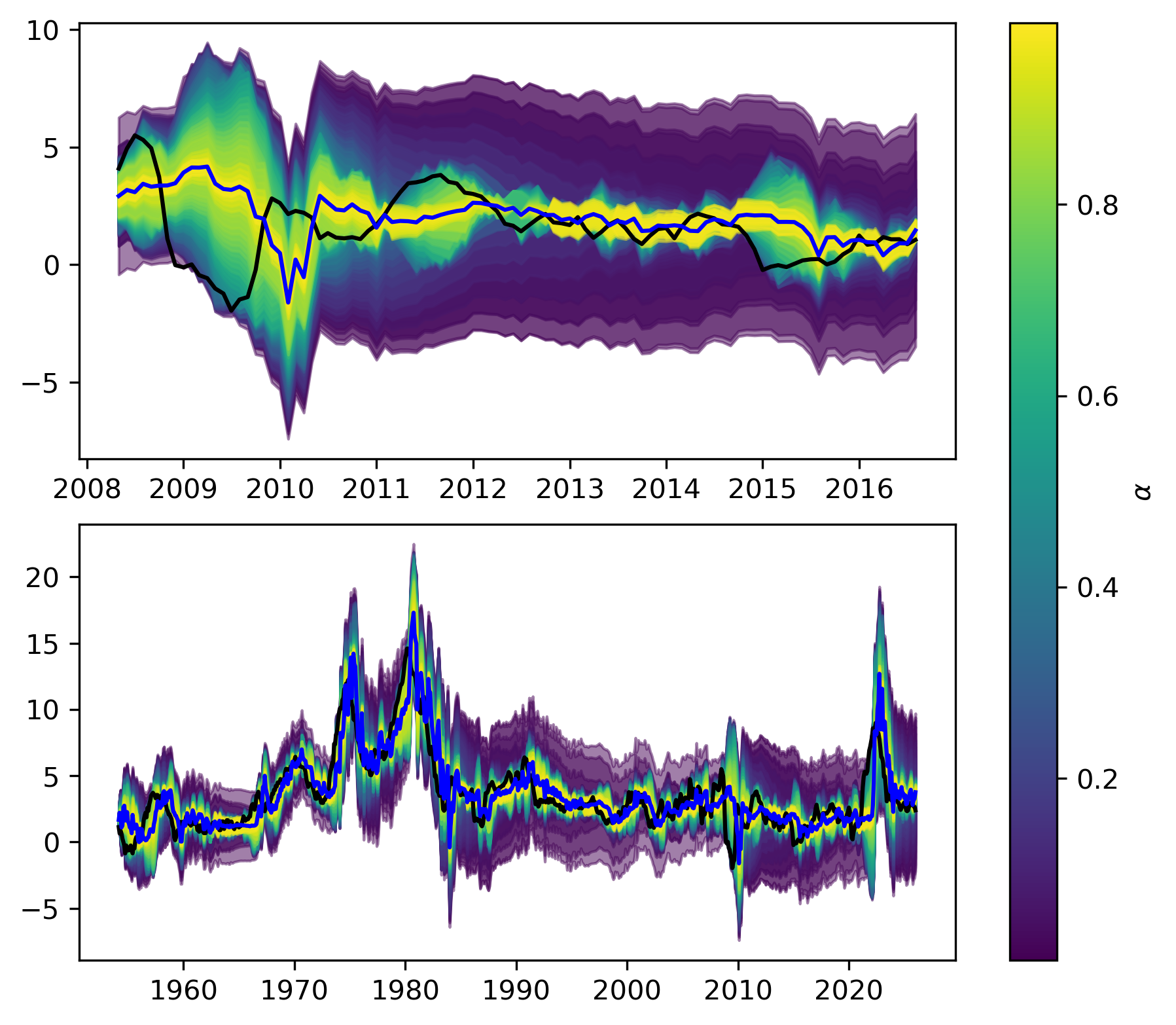}
        \caption{Quantile Tracker}
    \end{subfigure}
    \begin{subfigure}{0.4\textwidth}
        \includegraphics[width=\linewidth]{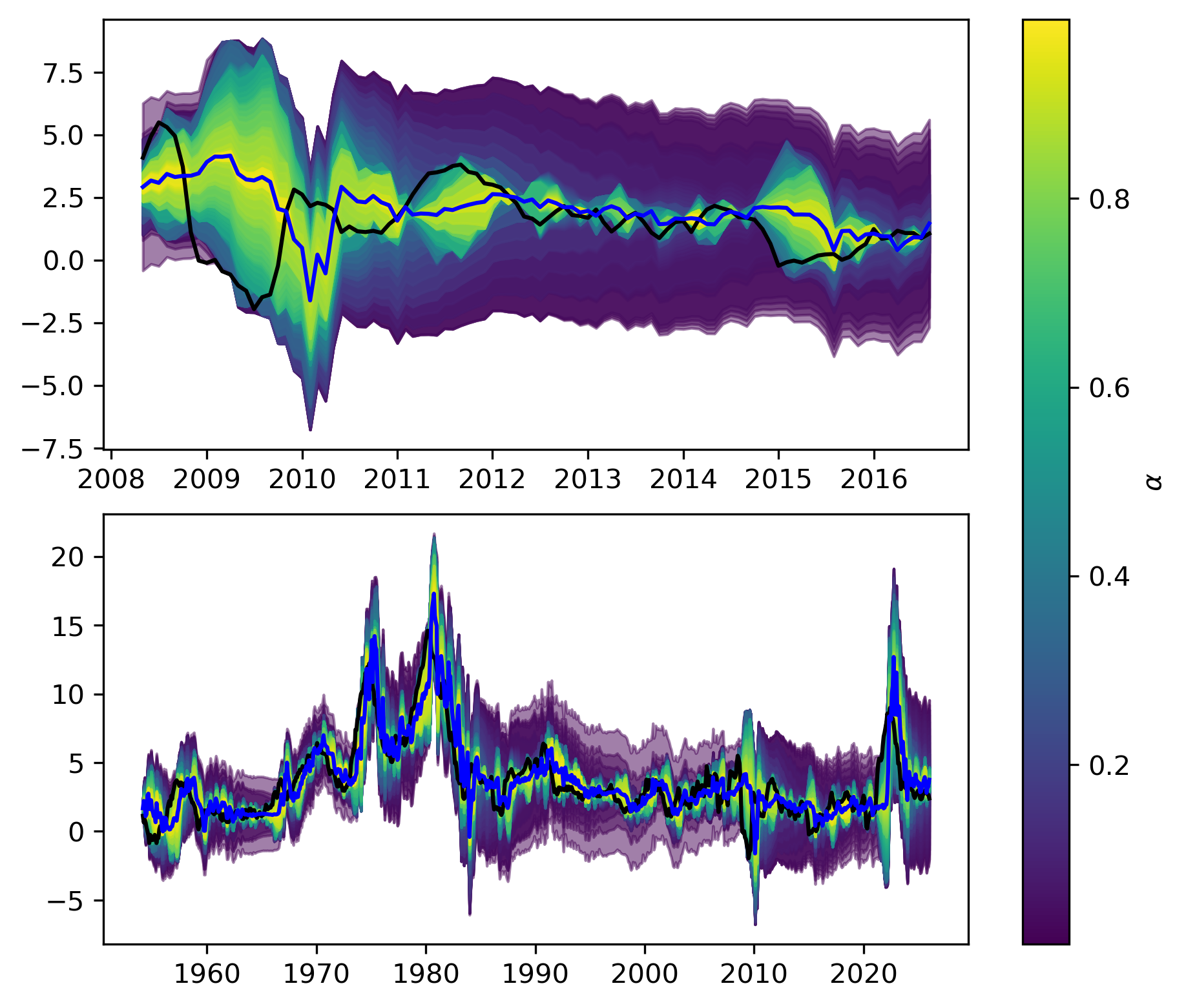}
        \caption{Quantile Tracker Projected}
    \end{subfigure}
    \vspace{.5cm}
    
    \begin{subfigure}{0.4\textwidth}
        \includegraphics[width=\linewidth]{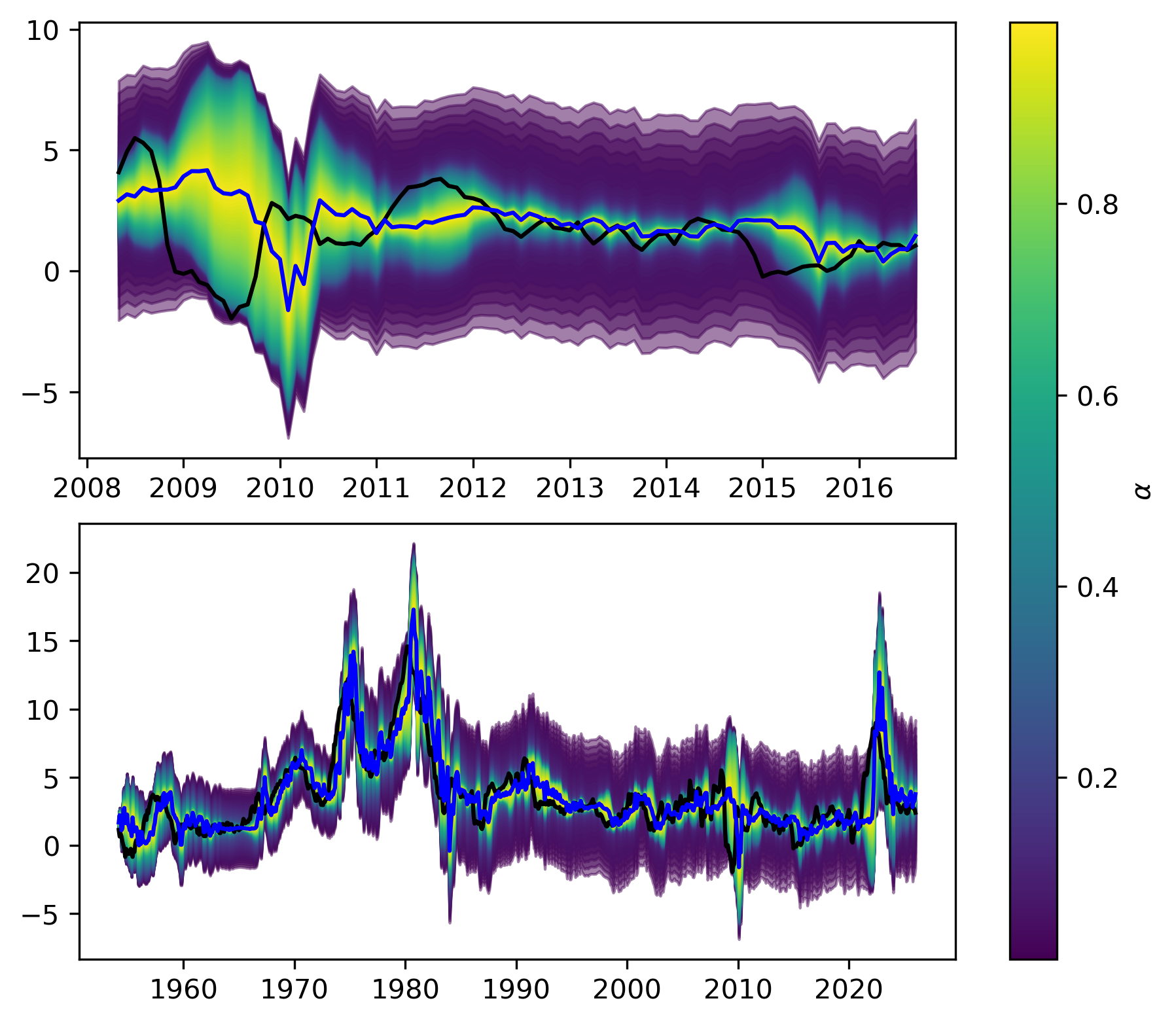}
        \caption{Exponentiated Gradient}
    \end{subfigure}
    \begin{subfigure}{0.4\textwidth}
        \includegraphics[width=\linewidth]{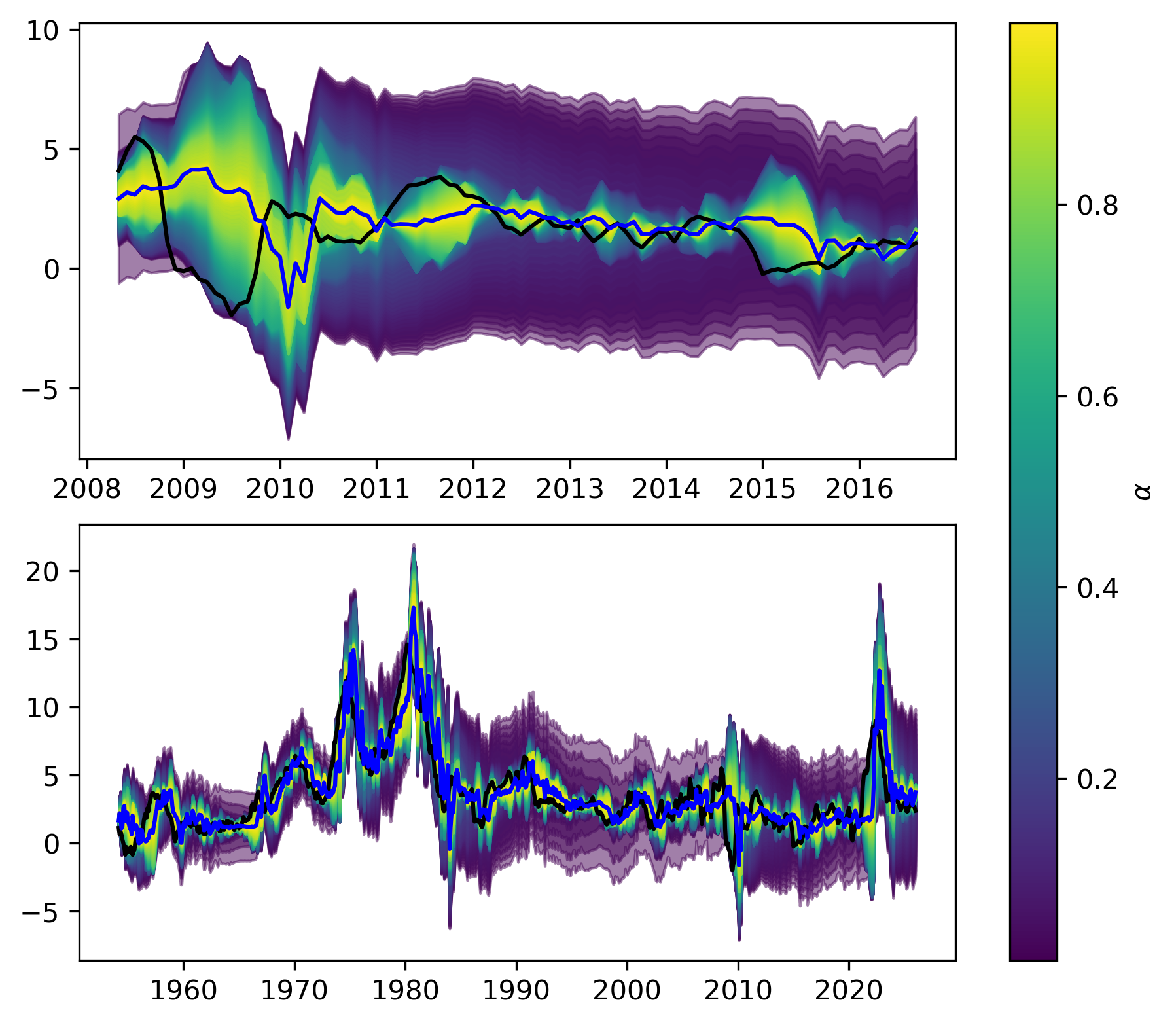}
        \caption{Projected Gradient}
        
    \end{subfigure}
    \caption{Prediction intervals across multiple coverage levels in the whole window (bottom panels) and zoom after the 2008 crises (top panels): (a) standard quantile tracker method, (b) quantile tracker with the last iteration projected, (c) exponentiated gradient and (d) projected gradient}
    \label{fig:prediction_sets}
\end{figure}

Figure~\ref{fig:prediction_sets} visualizes the resulting prediction sets across multiple coverage levels. Each color corresponds to a different miscoverage level $\alpha$, forming a fan chart representation of the estimated quantile function over time. All methods adapt to changes in uncertainty, with prediction sets widening during periods of high volatility (e.g., around 1980, 2010, and 2020) and contracting during more stable regimes.

However, important qualitative differences emerge across methods. Baseline approaches (top panels) exhibit higher variability and occasional distortions in the quantile structure, leading to irregular and sometimes non-nested prediction bands. In contrast, the proposed constrained methods (bottom panels) produce smoother and more coherent prediction sets, consistently preserving the nested structure across all coverage levels. This behavior highlights the benefit of enforcing monotonicity jointly with online updates: beyond guaranteeing structural consistency, it stabilizes the learning dynamics and yields more interpretable and reliable uncertainty estimates in practice. In addition, The EG update yields smoother prediction sets by coordinating adjacent quantiles through cumulative discrepancies, reducing abrupt changes across levels.

\section{Discussion}

This work is motivated by practical settings in which uncertainty must be assessed across multiple risk levels simultaneously. In many applications---including forecasting, risk management, and scientific decision-making---different stakeholders operate under heterogeneous risk tolerances and require prediction sets at several coverage levels. Standard online conformal methods address each level independently, which can lead to incoherent and non-nested prediction sets. By jointly estimating multiple quantiles under structural constraints, our approach provides a coherent \emph{risk spectrum} with valid coverage guarantees and strictly nested prediction sets, improving both interpretability and usability in downstream decision-making.

A key insight of our methods lies in how information is shared across coverage levels. The exponentiated gradient (EG) update induces a form of \emph{global coupling}, where each update aggregates miscoverage discrepancies across multiple levels. This leads to coordinated adjustments of the entire quantile curve, allowing the method to borrow strength across levels and improving statistical efficiency. In contrast, the projected gradient (PG) update operates through \emph{local corrections}: quantiles are first updated independently and then minimally adjusted via projection to restore monotonicity. While this guarantees feasibility, it lacks the intrinsic global coordination of EG. Empirically, this distinction manifests in improved quantile estimation accuracy for EG, particularly in non-stationary environments where sharing information across levels is beneficial.

Beyond accuracy, the global structure of the EG update also impacts the \emph{smoothness} of the estimated quantile function across coverage levels. Because updates are driven by cumulative discrepancies, adjacent quantiles evolve in a coordinated manner, reducing irregularities and abrupt changes across levels. This leads to smoother and more stable prediction bands, which is particularly desirable in applications such as forecasting, where interpretability of uncertainty is critical. In contrast, methods based on independent updates or post-hoc projection may exhibit more variability and local distortions in the quantile structure.

An important direction for future work is the incorporation of adaptive or data-driven learning rates. Recent advances in online conformal prediction have shown that carefully designed step-size sequences can improve adaptivity to distributional shifts and provide stronger guarantees across stochastic and adversarial regimes. Integrating such adaptive mechanisms into the joint quantile estimation framework may further enhance performance, allowing the methods to respond more effectively to non-stationarity while preserving the structural constraints imposed here.

Finally, our work contributes to a broader understanding of online conformal prediction beyond worst-case guarantees. While many existing methods focus on adversarial settings, practical applications often exhibit stochastic structure with evolving distributions. By framing the problem as one of tracking time-varying quantiles and analyzing performance through dynamic regret, our approach helps bridge the gap between adversarial robustness and statistical efficiency. This perspective opens the door to a richer analysis of online conformal methods in stochastic environments, where the goal is not only to guarantee coverage, but also to accurately and efficiently track the underlying uncertainty over time.

\newpage

\bibliographystyle{achemso}
\bibliography{references}

\newpage
\begin{alphasection}

\section{Proofs}

\textbf{Proof Proposition \ref{prop:explicit_projection_eg}}
    \begin{proof}
Since $\tilde w_i>0$ for all $i$, the map
\[
w\mapsto D_{\mathrm{KL}}(w\|\tilde w)=\sum_{i=1}^n w_i\log\frac{w_i}{\tilde w_i}
\]
is strictly convex on $\mathbb{R}_+^n$. As $\mathcal X$ is a nonempty closed convex set, the minimizer exists and is unique.

Consider the constrained problem
\[
\min_{w\in\mathbb{R}^n}\ \sum_{i=1}^n w_i\log\frac{w_i}{\tilde w_i}
\qquad
\text{s.t.}\qquad
w_i\ge \mu,\quad \sum_{i=1}^n w_i=1.
\]
Its Lagrangian is
\[
\mathcal L(w,\lambda,\nu)
=
\sum_{i=1}^n w_i\log\frac{w_i}{\tilde w_i}
+\lambda\Big(\sum_{i=1}^n w_i-1\Big)
+\sum_{i=1}^n \nu_i(\mu-w_i),
\]
where $\nu_i\ge 0$. Let $w^*$ be the optimizer. By the KKT conditions,
\[
\log\frac{w_i^*}{\tilde w_i}+1+\lambda-\nu_i=0,
\qquad i=1,\dots,n.
\]
Hence
\[
w_i^*=\tilde w_i e^{-1-\lambda+\nu_i}.
\]

Now, if $w_i^*>\mu$, then complementary slackness gives $\nu_i=0$, and therefore
\[
w_i^*=c\,\tilde w_i,
\qquad\text{where } c:=e^{-1-\lambda}>0.
\]
If instead $w_i^*=\mu$, then the lower-bound constraint is active. Thus, for every $i$,
\[
w_i^*=\max\{\mu,c\,\tilde w_i\}.
\]
Finally, the equality constraint $\sum_{i=1}^n w_i^*=1$ implies that $c>0$ must satisfy
\[
\sum_{i=1}^n \max\{\mu,c\,\tilde w_i\}=1.
\]
Since the left-hand side is continuous and strictly increasing in $c$, such a $c$ exists and is unique. This proves the claim.
\end{proof}

\begin{proposition}\label{prop:lower_bound_regret}
    (Proposition 5 in \cite{gibbs2024conformal}) Let $s$ be a random variable and assume that there exists a value $q^*$ such that $\mathbb{P}\left(s<q^*\right)=\alpha$. Then, for any $q$,

$$
\mathbb{E}\left[\rho_t(q, 1-\alpha)\right]-\mathbb{E}\left[\rho_t(q^*, 1-\alpha)\right] = 
\begin{cases} 
\mathbb{E}\left[(q-s) \mathbbm{1}_{q < s \leq q^*}\right], \text { if } q \leq q^*  \\ 
\mathbb{E}\left[(s-q) \mathbbm{1}_{q^* < s \leq q}\right], \text { if } q^* \leq q
\end{cases}
$$

So, in particular, if $s$ has a density $p(\cdot)$ on $[0,B]$ with $p(x) \geq p>0$ for all $x \in[0,B]$, then

$$
\mathbb{E}\left[\rho_t(q, 1-\alpha)\right]-\mathbb{E}\left[\rho_t(q^*, 1-\alpha)\right]\geq \frac{p\left(q-q^*\right)^2}{2}
$$
\end{proposition}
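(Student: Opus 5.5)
The plan is to compute the excess pinball risk exactly through a pointwise decomposition, and then bound the resulting integral using the density lower bound. Write $\rho(q)=(s-q)(\mathbbm{1}_{\{s>q\}}-\alpha)$, which is the loss used throughout Section~\ref{section:method}. Its expectation has derivative $\alpha-\mathbb{P}(s>q)$ in $q$. So the relevant first-order condition at $q^*$ is $\mathbb{P}(s>q^*)=\alpha$, equivalently $\mathbb{P}(s\le q^*)=1-\alpha$. This matches the definition $\mathbb{P}(s_t<q^*_{t,i})=1-\alpha_i$ used in Theorems~\ref{theorem:l2_quantiles_eg} and~\ref{theorem:l2_quantiles_projected}. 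I will read the hypothesis of the proposition in this sense (with a density, $<$ versus $\le$ is immaterial) and state this explicitly at the start.

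\textbf{Step 1 (pointwise identity).} Take the case $q\le q^*$. Expanding both losses gives
\[
\rho(q)-\rho(q^*)=(s-q)\mathbbm{1}_{\{s>q\}}-(s-q^*)\mathbbm{1}_{\{s>q^*\}}-\alpha(q^*-q).
\]
I split $\mathbbm{1}_{\{s>q\}}=\mathbbm{1}_{\{q<s\le q^*\}}+\mathbbm{1}_{\{s>q^*\}}$. On $\{s>q^*\}$ the two terms combine to $(q^*-q)$, so
\[
\rho(q)-\rho(q^*)=(s-q)\mathbbm{1}_{\{q<s\le q^*\}}+(q^*-q)\big(\mathbbm{1}_{\{s>q^*\}}-\alpha\big).
\]

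\textbf{Step 2 (take expectations).} The last term has expectation $(q^*-q)(\mathbb{P}(s>q^*)-\alpha)=0$ by the first-order condition. This leaves
\[
\mathbb{E}[\rho(q)]-\mathbb{E}[\rho(q^*)]=\mathbb{E}\big[(s-q)\mathbbm{1}_{\{q<s\le q^*\}}\big]\ge 0.
\]
This is the displayed formula up to the sign convention inside the expectation. The case $q\ge q^*$ is symmetric, yielding $\mathbb{E}[(q-s)\mathbbm{1}_{\{q^*<s\le q\}}]$. Only integrability is needed, and it holds because $s\in[0,B]$.

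\textbf{Step 3 (quadratic lower bound).} Assume $q,q^*\in[0,B]$; this holds for all iterates, since both methods keep $\bar q_t$ inside $[0,B]^K$. With density $p(\cdot)\ge p$ on $[0,B]$,
\[
\int_q^{q^*}(s-q)\,p(s)\,ds\ \ge\ p\int_q^{q^*}(s-q)\,ds=\frac{p(q^*-q)^2}{2},
\]
and the case $q\ge q^*$ gives the same bound by symmetry.

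The only delicate point is the bookkeeping of conventions: which tail $\alpha$ refers to, and the orientation of $(q-s)$ versus $(s-q)$. I will fix these by the derivative calculation above, so that the excess risk is manifestly nonnegative. A secondary point is that the quadratic bound genuinely needs $q$ inside the support $[0,B]$ where the density bound holds. I will state this restriction and note that it is satisfied by the constraint sets used in both algorithms.
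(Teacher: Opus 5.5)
Your proof is correct. The paper gives no argument of its own here; it simply imports the result from \cite{gibbs2024conformal}. Your pointwise identity $\rho(q)-\rho(q^*)=(s-q)\mathbbm{1}_{\{q<s\le q^*\}}+(q^*-q)(\mathbbm{1}_{\{s>q^*\}}-\alpha)$, followed by taking expectations and integrating the density lower bound over $[q,q^*]$, is therefore a self-contained replacement for the citation, and it is the standard argument.

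What your route buys over the citation is that it ties the conventions to the loss $\rho_t(q,1-\alpha)=(s-q)(\mathbbm{1}_{\{s>q\}}-\alpha)$ that the paper actually uses. The imported statement needs exactly the repairs you identified, and you should state them more firmly than ``up to the sign convention''. Under the printed hypothesis $\mathbb{P}(s<q^*)=\alpha$, $q^*$ is the $\alpha$-quantile, not the minimizer of $\mathbb{E}[\rho_t(\cdot,1-\alpha)]$. The minimizer requires $\mathbb{P}(s>q^*)=\alpha$, which is consistent with the definition of $q^*_{t,i}$ in the theorems. Moreover, both printed right-hand sides are nonpositive, since $q-s<0$ on $\{q<s\}$ and $s-q\le 0$ on $\{s\le q\}$. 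So the literal statement contradicts its own quadratic bound, and your integrands, $(s-q)$ for $q\le q^*$ and $(q-s)$ for $q\ge q^*$, are the correct ones.

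Your restriction $q\in[0,B]$ is likewise necessary rather than cosmetic. Take $s\sim\mathrm{Unif}[0,1]$ (so $B=p=1$), $\alpha=1/2$, $q^*=1/2$ and $q=-1$. The excess risk is $\int_0^{1/2}(s+1)\,ds=5/8$, which is below $(q-q^*)^2/2=9/8$. As you note, the restriction holds for both algorithms, since $\bar q_t\in[0,B]^K$ for EG and PG.
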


\begin{lemma}\label{lemma:dyn_regret_eg}
    Let $s_1, \dots, s_T$ be an arbitrary sequence in $[0, B]$, then the EG update rule has the following regret bound.

\begin{equation}
    \sum_{t=1}^T g_t\left(w_t\right)-\sum_{t=1}^T g_t\left(w_t^{*}\right) 
\le
\frac{(1+\log(1/\mu))}{\eta} \left(1+\sum_{t=1}^{T-1}\norm{w_{t+1}^*-w_t^*}_1\right)
+
\eta G_\infty^2 T
\end{equation} 

\end{lemma}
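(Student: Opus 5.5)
We follow the standard dynamic-regret analysis of online mirror descent with the negative-entropy mirror map, restricted to the truncated simplex $\Delta_{K+1}^{\mu}$. Write $\nabla_t:=\nabla g_t(w_t)$, so that $\|\nabla_t\|_\infty\le G_\infty$ (from the coordinate expression $(\nabla g_t)_i=B\sum_{j\ge i}(\alpha_j-\err_{t,j})$ one gets $G_\infty\le BK$). Since $g_t$ is convex (a composition of the convex pinball losses with the linear map $J$), we first linearize:
\[
g_t(w_t)-g_t(w_t^*)\le\langle\nabla_t,w_t-w_t^*\rangle .
\]

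The per-step inequality comes from the EG step followed by the KL projection. By the optimality condition of the projection (the generalized Pythagorean inequality for Bregman projections onto the convex set $\Delta^\mu_{K+1}$), for any $u\in\Delta^\mu_{K+1}$,
\[
\eta\langle\nabla_t,w_t-u\rangle\le D_{\KL}(u,w_t)-D_{\KL}(u,w_{t+1})+D_{\KL}(w_t,\tilde w_{t+1}).
\]
The last term is the usual local-norm term. Since $w_t$ lies in the simplex, $D_{\KL}(w_t,\tilde w_{t+1})=\sum_i w_{t,i}\bigl(e^{-\eta\nabla_{t,i}}-1+\eta\nabla_{t,i}\bigr)$. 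If $\eta G_\infty\le 1$, this is at most $\eta^2\sum_i w_{t,i}\nabla_{t,i}^2\le\eta^2G_\infty^2$. (The normalization is absorbed because we compare unnormalized KL; alternatively we can invoke Hoeffding's lemma after adding a constant to $\nabla_t$, which does not change the update.) Taking $u=w_t^*$ and dividing by $\eta$ gives the $\eta G_\infty^2$ per-round term, which sums to $\eta G_\infty^2T$.

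The main obstacle is telescoping the divergence terms when the comparator moves. We write
\[
D_{\KL}(w_t^*,w_t)-D_{\KL}(w_t^*,w_{t+1})=\bigl[D_{\KL}(w_t^*,w_t)-D_{\KL}(w_{t+1}^*,w_{t+1})\bigr]+\bigl[D_{\KL}(w_{t+1}^*,w_{t+1})-D_{\KL}(w_t^*,w_{t+1})\bigr].
\]
The first bracket telescopes to at most $D_{\KL}(w_1^*,w_1)\le\log(K+1)$, because $w_1$ is uniform. This is bounded by $\log(1/\mu)$ since $\mu<1/(K+1)$. For the second bracket, on the simplex,
\[
D_{\KL}(a,c)-D_{\KL}(b,c)=\sum_i a_i\log a_i-\sum_i b_i\log b_i+\sum_i(b_i-a_i)\log c_i .
\]
The truncation is what makes this controllable. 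Because $c_i=w_{t+1,i}\ge\mu$, we have $|\log c_i|\le\log(1/\mu)$, so the cross term is at most $\log(1/\mu)\|a-b\|_1$. For the entropy difference we use concavity, $H(b)-H(a)\le\langle\nabla H(a),b-a\rangle$, with $|\log a_i+1|\le 1+\log(1/\mu)$ for $a_i\in[\mu,1]$. This yields a bound of $(1+\log(1/\mu))\|w_{t+1}^*-w_t^*\|_1$, with the sign convention arranged so that the gradient is evaluated at the point lying in $[\mu,1]$. If this step is too lossy, I would fall back to the cruder bound that $x\log x$ is $(1+\log(1/\mu))$-Lipschitz on $[\mu,1]$. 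Summing over $t$ and dividing by $\eta$ gives
\[
\frac{1+\log(1/\mu)}{\eta}\Bigl(1+\sum_{t=1}^{T-1}\|w_{t+1}^*-w_t^*\|_1\Bigr)+\eta G_\infty^2T,
\]
where the initial $\log(K+1)\le 1+\log(1/\mu)$ term is absorbed into the leading ``$1$''. This is the claim.

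We need the comparators $w_t^*$ to lie in $\Delta_{K+1}^{\mu}$ (implicitly assumed in the statement), so that the Pythagorean step applies and the logarithms are bounded. The step-size condition $\eta G_\infty\le1$ should be stated, or else handled by keeping the factor $e^{\eta G_\infty}$ in the constant.
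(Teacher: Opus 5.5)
Your argument has the same skeleton as the paper's: linearize by convexity, a per-round mirror-descent inequality, telescoping against a moving comparator, a Lipschitz bound for $D_{\mathrm{KL}}(\cdot,w_{t+1})$ on the truncated simplex, and $D_{\mathrm{KL}}(w_1^*,w_1)\le\log(1/\mu)$. The route differs only in the per-round inequality, and one constant in your Lipschitz step is wrong as written (see the second paragraph).

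The paper identifies EG-plus-KL-projection with the one-shot prox step $w_{t+1}=\arg\min_{w\in\Delta^\mu_{K+1}}\{\eta\langle\nabla g_t,w\rangle+D_{\mathrm{KL}}(w,w_t)\}$. It then uses the first-order optimality condition and the three-point identity, and absorbs $\eta\langle\nabla g_t,w_t-w_{t+1}\rangle$ by H\"older, Young and Pinsker ($\tfrac12\|w_t-w_{t+1}\|_1^2\le D_{\mathrm{KL}}(w_{t+1},w_t)$). This gives $\eta^2G_\infty^2/2$ per round for every $\eta>0$. You keep the two-step form: an exponential step, then a Bregman projection handled by the generalized Pythagorean inequality with unnormalized KL. You bound $D_{\mathrm{KL}}(w_t,\tilde w_{t+1})$ through $e^{-x}-1+x\le x^2$ for $x\ge-1$. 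This follows the algorithm literally and yields the local-norm quantity $\eta^2\sum_i w_{t,i}\nabla_{t,i}^2$, which can be far smaller than $\eta^2G_\infty^2$. The cost is the condition $\eta G_\infty\le1$ and a factor $2$ loss in the worst case. That condition is not needed for the lemma. Your Hoeffding variant works unconditionally: rescale $\tilde w_{t+1}$ to sum to one, i.e.\ shift $\nabla_t$ by a multiple of $\mathbf 1$, which leaves the three-point identity unchanged since $\langle\mathbf 1,u-w_t\rangle=0$. The paper's Pinsker route also works, and either gives $\eta G_\infty^2/2$ per round with no restriction on $\eta$.

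The Lipschitz step does not give the claimed constant. You bound the cross term by $\log(1/\mu)\|a-b\|_1$ and the entropy difference by $(1+\log(1/\mu))\|a-b\|_1$ separately. These add to $(1+2\log(1/\mu))\|a-b\|_1$, not $(1+\log(1/\mu))\|a-b\|_1$, and your fallback through the Lipschitz constant of $x\log x$ has the same defect. The fix is to combine the two pieces before bounding, using $\sum_i(a_i-b_i)=0$. Convexity of $F(x)=\sum_i x_i\log x_i$ gives $F(a)-F(b)\le\sum_i(a_i-b_i)\log a_i$, hence
\[
D_{\mathrm{KL}}(a,c)-D_{\mathrm{KL}}(b,c)\le\sum_i(a_i-b_i)\log\frac{a_i}{c_i}\le\log(1/\mu)\,\|a-b\|_1 ,
\]
because $a_i,c_i\in[\mu,1]$. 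That is within the required constant. The paper only asserts this step, so with the repair your argument actually supplies a proof of it. Your remark that the comparators must lie in $\Delta^\mu_{K+1}$ is also correct; the paper uses this without stating it.
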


\begin{proof}
Since the projected exponentiated gradient update is exactly the mirror descent update with negative entropy, we may write
\[
w_{t+1}
=
\arg\min_{w \in \Delta_{K+1}^\mu}
\left\{
\eta \langle \nabla g_t, w \rangle + D_{\mathrm{KL}}(w \| w_t)
\right\}.
\]
We first prove the standard one-step mirror descent inequality.

\medskip

\noindent\textbf{Step 1: One-step inequality.}
Fix $t$ and let $w_t^* \in \Delta_{K+1}^\mu$

$$
\begin{aligned}
 \eta \langle \nabla g_t, w_{t}-w_t^* \rangle
&= \left\langle
\nabla \phi(w_{t}) - \nabla \phi(w_{t+1}) -\eta \nabla g_t,
\, w_t^* - w_{t+1}
\right\rangle \\
&+ \left\langle
\nabla \phi(w_{t+1}) - \nabla \phi(w_{t}),
\, w_t^* - w_{t+1}
\right\rangle \\
&+\left\langle \eta \nabla g_t,
\, w_t - w_{t+1}
\right\rangle
\end{aligned}
$$

By optimality of $w_{t+1}$ over the convex set $\Delta_{K+1}^\mu$, 
\[
\left\langle
\eta \nabla g_t + \nabla \phi(w_{t+1}) - \nabla \phi(w_t),
\, w_t^* - w_{t+1}
\right\rangle
\ge 0.
\]

Therefore,
\[
\eta \langle \nabla g_t, w_{t}-w_t^* \rangle
\le \left\langle \eta \nabla g_t,
\, w_t - w_{t+1}
\right\rangle + \left\langle
\nabla \phi(w_{t+1}) - \nabla \phi(w_{t}),
\, w_t^* - w_{t+1}
\right\rangle
\]

Using the three-point identity for Bregman divergences,
\[
\left\langle
\nabla \phi(w_{t+1}) - \nabla \phi(w_t),
\, w_t^* - w_{t+1}
\right\rangle
=
D_{\mathrm{KL}}(w_t^* \| w_t)
-
D_{\mathrm{KL}}(w_t^* \| w_{t+1})
-
D_{\mathrm{KL}}(w_{t+1} \| w_t),
\]
Hence
\[
\eta \langle \nabla g_t, w_t-w_t^* \rangle
\le
\eta \langle \nabla g_t, w_t-w_{t+1} \rangle
+
D_{\mathrm{KL}}(w_t^* \| w_t)
-
D_{\mathrm{KL}}(w_t^* \| w_{t+1})
-
D_{\mathrm{KL}}(w_{t+1} \| w_t).
\]
Using H\"older's inequality,

$$
\begin{aligned}
\eta \langle \nabla g_t, w_t-w_t^* \rangle
&\le
\eta \langle \nabla g_t, w_t-w_{t+1} \rangle
+
D_{\mathrm{KL}}(w_t^* \| w_t)
-
D_{\mathrm{KL}}(w_t^* \| w_{t+1})
-
D_{\mathrm{KL}}(w_{t+1} \| w_t)\\
&\le \eta \|\nabla g_t\|_\infty \|w_t-w_{t+1}\|_1 + D_{\mathrm{KL}}(w_t^* \| w_t)
-
D_{\mathrm{KL}}(w_t^* \| w_{t+1})
-
D_{\mathrm{KL}}(w_{t+1} \| w_t)\\
&\le \frac{\eta^2}{2}\|\nabla g_t\|_\infty^2
+
\frac{1}{2}\|w_t-w_{t+1}\|_1^2 + D_{\mathrm{KL}}(w_t^* \| w_t)
-
D_{\mathrm{KL}}(w_t^* \| w_{t+1})
-
D_{\mathrm{KL}}(w_{t+1} \| w_t)\\
&\le \frac{\eta^2}{2}\|\nabla g_t\|_\infty^2 + D_{\mathrm{KL}}(w_t^* \| w_t)
-
D_{\mathrm{KL}}(w_t^* \| w_{t+1})
\end{aligned}
$$

Since $\|\nabla g_t\|_\infty \le G_{\infty}$,
\[
\eta \langle \nabla g_t, w_t-w_t^* \rangle
\le
D_{\mathrm{KL}}(w_t^* \| w_t)
-
D_{\mathrm{KL}}(w_t^* \| w_{t+1})
+
\frac{\eta^2 G_{\infty}^2}{2}.
\]

\medskip

\noindent\textbf{Step 2: Sum over time.}
Summing over $t=1,\dots,T$ yields
\[
\eta \sum_{t=1}^T \langle \nabla g_t, w_t-w_t^* \rangle
\le
\sum_{t=1}^T
\Bigl(
D_{\mathrm{KL}}(w_t^* \| w_t)
-
D_{\mathrm{KL}}(w_t^* \| w_{t+1})
\Bigr)
+
\frac{\eta^2 G_{\infty}^2 T}{2}.
\]
We rewrite the KL terms as

$$
\begin{aligned}
\sum_{t=1}^T
\Bigl(
D_{\mathrm{KL}}(w_t^* \| w_t)
-
D_{\mathrm{KL}}(w_t^* \| w_{t+1})
\Bigr)
&= \sum_{t=1}^T
\Bigl(
D_{\mathrm{KL}}(w_t^* \| w_t)
- D_{\mathrm{KL}}(w_{t+1}^* \| w_{t+1}) \\
&+
D_{\mathrm{KL}}(w_{t+1}^* \| w_{t+1}) -
D_{\mathrm{KL}}(w_t^* \| w_{t+1})
\Bigr) \\
&= D_{\mathrm{KL}}(w_1^* \| w_1)
-
D_{\mathrm{KL}}(w_{T+1}^* \| w_{T+1})\\
&+
\sum_{t=1}^{T-1}
\Bigl(
D_{\mathrm{KL}}(w_{t+1}^* \| w_{t+1})
-
D_{\mathrm{KL}}(w_t^* \| w_{t+1})
\Bigr).
\end{aligned}
$$
Since KL divergence is nonnegative,
\[
-D_{\mathrm{KL}}(w_{T+1}^* \| w_{T+1}) \le 0,
\]
so
\[
\eta \sum_{t=1}^T \langle \nabla g_t, w_t-w_t^* \rangle
\le
D_{\mathrm{KL}}(w_1^* \| w_1)
+
\sum_{t=1}^{T-1}
\Bigl(
D_{\mathrm{KL}}(w_{t+1}^* \| w_{t+1})
-
D_{\mathrm{KL}}(w_t^* \| w_{t+1})
\Bigr)
+
\frac{\eta^2 G_{\infty}^2 T}{2}.
\]

\medskip

\noindent\textbf{Step 3: KL is Lipschitz in the first argument on $\Delta_{K+1}^\mu$.}
We have
\[ 
D_{\mathrm{KL}}(w_{t+1}^* \| w_{t+1})
-
D_{\mathrm{KL}}(w_t^* \| w_{t+1})
\le
(1+\log(1/\mu))\|w_{t+1}^*-w_t^*\|_1.
\]
Therefore,
\[
\sum_{t=1}^{T-1}
\Bigl(
D_{\mathrm{KL}}(w_{t+1}^* \| w_{t+1})
-
D_{\mathrm{KL}}(w_t^* \| w_{t+1})
\Bigr)
\le
(1+\log(1/\mu))
\sum_{t=1}^{T-1}\|w_{t+1}^*-w_t^*\|_1.
\]

\medskip

\noindent\textbf{Step 4: Conclude the regret bound.}
Combining the previous inequalities,
\[
\eta \sum_{t=1}^T \langle \nabla g_t, w_t-w_t^* \rangle
\le
D_{\mathrm{KL}}(w_1^* \| w_1)
+
(1+\log(1/\mu))
\sum_{t=1}^{T-1}\|w_{t+1}^*-w_t^*\|_1
+
\frac{\eta^2 G_{\infty}^2 T}{2}.
\]
Dividing by $\eta$ gives
\[
\sum_{t=1}^T \langle \nabla g_t, w_t-w_t^* \rangle
\le
\frac{D_{\mathrm{KL}}(w_1^* \| w_1)}{\eta}
+
\frac{1+\log(1/\mu)}{\eta}
\sum_{t=1}^{T-1}\|w_{t+1}^*-w_t^*\|_1
+
\frac{\eta G_{\infty}^2 T}{2}.
\]
Finally, since $\nabla g_t \in \partial g_t(w_t)$ and $g_t$ is convex,
\[
g_t(w_t)-g_t(w_t^*)
\le
\langle \nabla g_t, w_t-w_t^* \rangle.
\]
Then
\[
\sum_{t=1}^T \bigl(g_t(w_t)-g_t(w_t^*)\bigr)
\le
\frac{D_{\mathrm{KL}}(w_1^* \| w_1)}{\eta}
+
\frac{1+\log(1/\mu)}{\eta}
\sum_{t=1}^{T-1}\|w_{t+1}^*-w_t^*\|_1
+
\frac{\eta G_{\infty}^2 T}{2}.
\]

To bound the initial divergence, note that for any $u,w \in \Delta_{K+1}^\mu$,
\[
D_{\mathrm{KL}}(u\|w)
=
\sum_{i=1}^N u_i \log\frac{u_i}{w_i}
\le
\sum_{i=1}^N u_i \log\frac{1}{\mu}
=
\log(1/\mu).
\]
Thus,
\[
D_{\mathrm{KL}}(w_1^* \| w_1) \le 1+\log(1/\mu),
\]
which gives the simplified bound.
\end{proof}

\begin{lemma}\label{lemma:dyn_regret_proj} (Theorem 10.1. in \cite{hazan2016introduction})
    Let $s_1, \dots, s_T$ be an arbitrary sequence in $[0, B]$, then the projected gradient on $\cQ$ update rule has the following regret bound.

\begin{equation}
    \sum_{t=1}^T f_t\left(q_t\right)-\sum_{t=1}^T f_t\left(q_t^{*}\right) \le \frac{3D^2}{\eta}\left(1+\sum_{t=1}^{T-1}\norm{q_{t+1}^*-q_t^*}_1\right) + \eta G_{2}^2T
\end{equation} 

Where $D$ is the diameter of $\cQ$ and $\norm{\nabla f}_{2} \le G_{2}$.
\end{lemma}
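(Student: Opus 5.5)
The statement is Lemma~\ref{lemma:dyn_regret_proj}, the dynamic regret bound for projected gradient descent on $\cQ$. I would reproduce the standard online-gradient-descent dynamic regret argument, mirroring the structure of the proof of Lemma~\ref{lemma:dyn_regret_eg} with $\phi(q)=\tfrac12\|q\|_2^2$ in place of the negative entropy.

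\textbf{One-step inequality.} Since $\Pi_{\cQ}$ is the Euclidean projection onto a closed convex set, it is nonexpansive. For any comparator $q_t^*\in\cQ$ this gives
\[
\|q_{t+1}-q_t^*\|_2^2\le \|q_t-\eta\nabla f_t(q_t)-q_t^*\|_2^2 .
\]
Expanding the right-hand side yields
\[
2\eta\langle \nabla f_t(q_t),q_t-q_t^*\rangle \le \|q_t-q_t^*\|_2^2-\|q_{t+1}-q_t^*\|_2^2+\eta^2G_2^2 .
\]
By convexity of $f_t$, the left-hand side is at least $2\eta\,(f_t(q_t)-f_t(q_t^*))$. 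The bounded-gradient assumption holds because $\|\nabla f_t\|_2\le\sqrt K$, since each coordinate $\alpha_i-\err_{t,i}$ lies in $[-1,1]$.

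\textbf{Telescoping with a moving comparator.} Summing over $t$, I would insert $\pm\|q_{t+1}-q_{t+1}^*\|_2^2$ as in Step 2 of Lemma~\ref{lemma:dyn_regret_eg}. This leaves the initial term $\|q_1-q_1^*\|_2^2\le D^2$ plus the drift terms
\[
\|q_{t+1}-q_{t+1}^*\|_2^2-\|q_{t+1}-q_t^*\|_2^2 .
\]
Writing this difference of squares as a product gives
\[
\langle q_t^*-q_{t+1}^*,\;2q_{t+1}-q_t^*-q_{t+1}^*\rangle \le 2D\,\|q_{t+1}^*-q_t^*\|_2 \le 2D\,\|q_{t+1}^*-q_t^*\|_1 ,
\]
where the first inequality is Cauchy--Schwarz, using that all points lie in $\cQ$ of diameter $D$.

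\textbf{Conclusion.} Dividing by $2\eta$ gives
\[
\sum_t \bigl(f_t(q_t)-f_t(q_t^*)\bigr)\le \frac{D^2+2D V_T^q}{2\eta}+\frac{\eta G_2^2T}{2}.
\]
This is dominated by the stated bound $\tfrac{3D^2}{\eta}(1+V_T^q)+\eta G_2^2T$ as long as $D\ge 1$. If $D<1$, I would instead bound the drift term by $2D\sqrt{\cdot}$ differently, or simply absorb it using $2D\le 3\max(D,D^2)$. This is exactly the form in Theorem~10.1 of \cite{hazan2016introduction}.

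\textbf{Main obstacle.} The delicate step is the drift term. It requires a cross term linear in $D$, while the statement is written with $D^2$. I would handle this either by the rescaling remark above or by taking the cited theorem at face value. A secondary point is that the iterate in Eq.~\eqref{eq:updating_proj} uses the subgradient $\bar\alpha-\overline{\err}_t$ of the pinball loss, which is a valid subgradient, so convexity still applies.
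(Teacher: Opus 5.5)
Your derivation is correct as far as it goes. Nonexpansiveness of $\Pi_{\cQ}$, telescoping against the moving comparator, and the Cauchy--Schwarz bound on the drift together give
\[
\sum_{t=1}^T \bigl(f_t(q_t)-f_t(q_t^*)\bigr)\le \frac{D^2+2D\,V_T^q}{2\eta}+\frac{\eta G_2^2T}{2}.
\]
This is the standard dynamic-regret argument for projected OGD; the paper gives no argument of its own, only the citation. Comparing term by term, this implies the displayed bound exactly when $D\ge 1/3$, since you need $D V_T^q/\eta\le 3D^2V_T^q/\eta$. So your restriction $D\ge 1$ is stronger than necessary.

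The gap is the case $D<1/3$, and your patch does not cover it. For $D<1$, the inequality $2D\le 3\max(D,D^2)$ puts the coefficient $3D/\eta$ on $V_T^q$, not $3D^2/\eta$. ``Bounding the drift differently'' is not an argument, and taking the cited theorem at face value is not a proof.

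More importantly, no patch can exist, because the inequality as written is false for small $D$. Rescale $s_t\mapsto cs_t$, $\cQ\mapsto c\cQ$, $\eta\mapsto c\eta$. The errors and gradients are unchanged and the regret scales by $c$, but $D^2V_T^q/\eta$ scales by $c^2$. So at small scales the stated bound effectively loses its path-length term.

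Here is a concrete counterexample. Take $K=1$ and $\alpha_1=1/2$, so $\cQ=[0,B]$, $D=B$, $f_t(q)=\tfrac12|s_t-q|$ and $G_2=1/2$. The algorithm is deterministic, so you may fix in advance $s_t=0$ if $q_t\ge B/2$ and $s_t=B$ otherwise, and take $q_t^*=s_t\in\cQ$. The lemma, like your proof, allows arbitrary comparators in $\cQ$.
\begin{itemize}
\item The left side is at least $BT/4$.
\item Since $V_T^q\le BT$, the right side is at most $\frac{3B^2}{\eta}+\frac{3B^3T}{\eta}+\frac{\eta T}{4}$.
\item With $\eta=2\sqrt3\,B^{3/2}$ this equals $\frac{\sqrt3}{2}B^{1/2}+\sqrt3\,B^{3/2}T$.
\item This is smaller than $BT/4$ once $B<1/48$ and $T$ is large, because $\sqrt{3B}<1/4$.
\end{itemize}

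So what you actually proved is the correct statement: your displayed bound, or the weaker $\frac{3D}{\eta}(D+V_T^q)+\eta G_2^2T$. The lemma as written needs the extra hypothesis $D\ge 1/3$. In the paper's application $D^2=B^2K$, so this means $B\sqrt K\ge 1/3$. Without that hypothesis, the proof of Theorem~\ref{theorem:l2_quantiles_projected} gives the constant $\tfrac12B^2K+B\sqrt K\,V_T^q$ in place of $3B^2K(1+V_T^q)$.
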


\textbf{Proof Theorem \ref{theorem:l2_quantiles_eg}}

\begin{proof}
    We can apply Proposition \ref{prop:lower_bound_regret} and Lemma \ref{lemma:dyn_regret_eg} so 

$$
\begin{aligned}
\sum_{t=1}^T\frac{p\norm{q_t-q_t^*}_2^2}{2} &= \sum_{t=1}^T\sum_{i=1}^K \frac{p\left(q_{t, i}-q_{t, i}^*\right)^2}{2} \\
&\le \sum_{t=1}^T\sum_{i=1}^{K} \mathbb{E}\left[\rho_t(q_{t, i}, 1-\alpha_i)\right]-\mathbb{E}\left[\rho_t(q_{t, i}^*, 1-\alpha_i)\right] \\
&= \mathbb{E}\left[\sum_{t=1}^T \sum_{i=1}^{K} \left(\rho_t(q_{t, i}, 1-\alpha_i)-\rho_t(q_{t, i}^*, 1-\alpha_i)\right)\right] \\
&=  \mathbb{E}\left[\sum_{t=1}^Tg_t(w_t) - g_t(w_t^*)\right] \\ 
&\le \frac{(1+\log(1/\mu))}{\eta} \left(1+\sum_{t=1}^{T-1}\norm{w_{t+1}^*-w_t^*}_1\right)
+
\eta (B K)^2 T
\end{aligned}
$$
  
\end{proof}

\textbf{Proof Theorem \ref{theorem:l2_quantiles_projected}}

\begin{proof}
    We can apply Proposition \ref{prop:lower_bound_regret} and Lemma \ref{lemma:dyn_regret_proj} so 

$$
\begin{aligned}
\sum_{t=1}^T\frac{p\norm{q_t-q_t^*}_2^2}{2} &= \sum_{t=1}^T\sum_{i=1}^K \frac{p\left(q_{t, i}-q_{t, i}^*\right)^2}{2} \\
&\le \sum_{t=1}^T\sum_{i=1}^{K} \mathbb{E}\left[\rho_t(q_{t, i}, 1-\alpha_i)\right]-\mathbb{E}\left[\rho_t(q_{t, i}^*, 1-\alpha_i)\right] \\
&= \mathbb{E}\left[\sum_{t=1}^T \sum_{i=1}^{K} \left(\rho_t(q_{t, i}, 1-\alpha_i)-\rho_t(q_{t, i}^*, 1-\alpha_i)\right)\right] \\
&=  \mathbb{E}\left[\sum_{t=1}^Tf_t(q_t) - f_t(q_t^*)\right] \\ 
&\le \frac{3B^2K}{\eta}\left(1+\sum_{t=1}^{T-1}\norm{q_{t+1}^*-q_t^*}_1\right) + \eta KT
\end{aligned}
$$
\end{proof}

\end{alphasection}

\end{document}

%% file: macro.tex
\usepackage{dsfont}
\usepackage{natbib}
\usepackage{amssymb}
\usepackage{amsmath,amsthm,amssymb,amsfonts}
\usepackage{algorithm}
\usepackage{algpseudocode}
\usepackage{booktabs}
\usepackage{graphicx}

\newtheorem{theorem}{Theorem}[section]
\newtheorem{lemma}[theorem]{Lemma}

\newtheorem{proposition}{Proposition}

\usepackage{xcolor}

\newcounter{alphasect}
\def\alphainsection{0}

\let\oldsection=\section
\def\section{%
  \ifnum\alphainsection=1%
    \addtocounter{alphasect}{1}
  \fi%
\oldsection}%

\renewcommand\thesection{%
  \ifnum\alphainsection=1%
    \Alph{alphasect}
  \else%
    \arabic{section}
  \fi%
}%

\newenvironment{alphasection}{%
  \ifnum\alphainsection=1%
    \errhelp={Let other blocks end at the beginning of the next block.}
    \errmessage{Nested Alpha section not allowed}
  \fi%
  \setcounter{alphasect}{0}
  \def\alphainsection{1}
}{%
  \setcounter{alphasect}{0}
  \def\alphainsection{0}
}%

\def\bbP{\mathbb{P}}

\def\bbR{\mathbb{R}}

\def\cC{\mathcal{C}}

\def\cQ{\mathcal{Q}}

\def\cX{\mathcal{X}}
\def\cY{\mathcal{Y}}

\def\exp{\mathrm{exp}}

\def\KL{\mathrm{KL}}

\def\norm#1{\left\|#1\right\|}